\theoremstyle{plain}
\DeclareMathOperator*{\argmax}{arg\,max}
\declaretheorem[name=Definition,numberwithin=section]{definition}
\newcommand{\updatematrix}{A}
\newcommand{\xtest}{x_{\mathrm{test}}}
\newcommand{\Xtest}{X_{\mathrm{test}}}
\newcommand{\Xtrain}{X_{\mathrm{train}}}
\newcommand{\train}{{\mathrm{train}}}
\newcommand{\test}{{\mathrm{test}}}
\newcommand{\ttheta}{\tilde{\theta}}
\newcommand{\TD}{\mathrm{TD}}
\newcommand{\states}{\mathcal{X}}
\newcommand{\mdp}{\mathcal{M}}
\newcommand{\markcomment}[1]{\text{}}
\newcommand{\clarecomment}[1]{\text{}}
\newcommand{\willcomment}[1]{\text{}}
\icmltitlerunning{Learning Dynamics and Generalization in Reinforcement Learning}
\begin{document}

\twocolumn[
\icmltitle{Learning Dynamics and Generalization in Reinforcement Learning}
\icmlsetsymbol{equal}{*}

\begin{icmlauthorlist}
\icmlauthor{Clare Lyle}{oxf}
\icmlauthor{Mark Rowland }{dpm}
\icmlauthor{Will Dabney}{dpm}
\icmlauthor{Marta Kwiatkowksa}{oxf}
\icmlauthor{Yarin Gal}{oxf}

\end{icmlauthorlist}

\icmlaffiliation{oxf}{Department of Computer Science, University of Oxford}
\icmlaffiliation{dpm}{DeepMind}

\icmlcorrespondingauthor{Clare Lyle}{clare.lyle@cs.ox.ac.uk}

\icmlkeywords{Machine Learning, ICML}

\vskip 0.3in
]



\printAffiliationsAndNotice{}  

\begin{abstract}
Solving a reinforcement learning (RL) problem poses two competing challenges: fitting a potentially discontinuous value function, and generalizing well to new observations. In this paper, we analyze the learning dynamics of temporal difference algorithms to gain novel insight into the tension between these two objectives.
We show theoretically that temporal difference learning encourages agents to fit non-smooth components of the value function early in training, and at the same time induces the second-order effect of discouraging generalization.
We corroborate these findings in deep RL agents trained on a range of environments, finding that neural networks trained using temporal difference algorithms on dense reward tasks exhibit weaker generalization between states than randomly initialized networks and networks trained with policy gradient methods.
Finally, we investigate how post-training policy distillation may avoid this pitfall, and show that this approach improves generalization to novel environments in the ProcGen suite and improves robustness to input perturbations.
\end{abstract}

\section{Introduction}
The use of function approximation in reinforcement learning (RL) faces two principal difficulties: existing algorithms are vulnerable to divergence and instability \citep{baird1993advantage}, and value estimates that do converge tend to generalize poorly to new observations \citep{zhang2018study}. Crucial to both of these difficulties is the concept of \textit{interference}, the degree to which an update to an agent's predictions at one state influences its predictions at other states. Function approximation schemes with weaker interference, such as those induced by tabular value functions or tile coding schemes, have been shown empirically to produce more stable behaviour and faster convergence in value-based algorithms on a number of classic control domains \citep{ghassian2020improving}. However, such schemes by construction require treating the value functions for different states independently, limiting the potential for a model’s predictions to generalize to new observations. We will be interested in studying how the learning dynamics induced by typical update rules such as Q-learning and policy gradients influence interference in deep RL agents. 

An array of prior empirical works demonstrate that RL algorithms which succeed at the tasks they are trained on frequently overfit on their training environment's observations and dynamics \citep{lewandowskigeneralization,farebrother2018generalization,cobbe2021phasic, zhang2018study}.
While many prior works have sought training methodologies to improve generalization \citep{igl2019generalization, raileanu2021automatic}, the source of the relative tendency of deep RL methods to overfit to their training distribution remains under-explored. Understanding the role of interference in deep reinforcement learning may thus provide insights that lead to the development of agents which can quickly learn robust policies in any application domain. One avenue that we will draw on in particular is the study of agents' \textit{learning dynamics}, which can reveal insights into not just the convergence of algorithms, but also into the trajectory taken by the agent's value function.

In this work, we study how interference evolves in deep RL agents. Our primary contributions will be twofold: first, to provide a rigorous theoretical and empirical analysis of the relationship between generalization, interference, and the dynamics of temporal difference learning; second, to study the effect of distillation, which avoids the pitfalls of temporal difference learning, on generalization to novel environments. 
Towards this first contribution, we extend the analysis of \citet{lyle2021effect} to show that the dynamics of temporal difference learning accelerate convergence along non-smooth components of the value function first, resulting in implicit regularization towards learned representations that generalize weakly between states. Our findings present an explanation for widely-observed vulnerability of value-based deep RL agents to overfit to their training observations \citep{raileanu2021decoupling,zhang2018study}.  

We then evaluate whether these findings hold empirically across a range of popular deep RL benchmarks. We measure interference by constructing a summary statistic which evaluates the extent to which optimization steps computed for one state influence predictions on other states, which we call the \textit{update rank}. We find that value-based agents trained with temporal difference (TD) methods learn representations with weak interference between states, performing updates similar to those of a lookup table, whereas networks trained with policy-gradient losses learn representations for which an update on one state has a large effect on the policy at other states.
Finally, we show that post-training policy distillation is a cheap and simple approach to improve the generalization and robustness of learned policies.
\section{Background}
We focus on the reinforcement learning problem, which we formalize as a Markov decision process (MDP) \citep{puterman}. An MDP consists of a tuple $\langle \states, A, R, P, \gamma, \states_0 \rangle$, where $\states$ denotes the set of states, $A$ the set of actions, $R:\states \times A \rightarrow \mathbb{R}$ a reward function, $P: \states \times A \rightarrow \mathscr{P}(\states)$ a possibly stochastic transition probability function, $\gamma \in [0, 1)$ the discount factor, and $\states_0$ the initial state. In reinforcement learning, we seek an optimal policy $\pi^* : \states \rightarrow \mathscr{P}(A)$ which maximizes the expected sum of discounted returns from the initial state. 

\textbf{Value-based reinforcement learning} seeks to model the action-value function 
\begin{equation}
    Q^{\pi^*}(x,a) = \mathbb{E}[\sum_{t \geq 0} \gamma^t R_t(x_t, a_t)|x_0=x, a_0=a ]
\end{equation} 
as a tool to learn an optimal policy. 
Given a policy $\pi$, we leverage a recursive expression of the value function as the fixed point of the policy evaluation Bellman operator, defined as follows
\begin{equation}
   (T^\pi Q)(x,a) = \mathbb{E}_{P(x' | x,a), \pi(a'| x')}[R(x,a) + \gamma  Q(x', a')] .
\end{equation}
Temporal difference learning \citep{sutton1988learning} performs updates based on sampled transitions ($x_t, a_t, r_t, x'_t$), leveraging a stochastic estimate of the Bellman targets. 

To find an optimal policy, we turn to the control setting. We let the policy $\pi_t$, used to compute the target, be greedy with respect to the current action-value function $Q_t$. This results in updates based on the Bellman \textit{optimality} operator
\begin{equation}
(T^* Q_t)(x,a)  = \mathbb{E} [ R(x,a) + \gamma  \max_{a'} [Q^{\pi_t}(x', a')]] \; .
\end{equation}
In control problems, Q-learning \citep{watkins1989learning,watkins1992q} is a widely-used stochastic approximation of the Bellman optimality operator. When a function approximator $Q_\theta$, with parameters $\theta$, is used to approximate $Q^\pi$, as in deep reinforcement learning, we perform semi-gradient updates $f(\theta)$ of the following form, where $a^* = \argmax_{a}(Q_\theta(x_{t+1}, a))$.
\begin{equation}
    f(\theta) = (\nabla_\theta Q_\theta) [r_t + \gamma Q_\theta(x_{t+1}, a^*) - Q_\theta(x_t, a_t)]
\end{equation}

\textbf{Policy gradient methods} \citep{sutton2000policy} operate directly on a parameterized policy $\pi_\theta$. We let $d^\pi$ denote the stationary distribution induced by a policy $\pi$ over states in the MDP. Policy gradient methods aim to optimize the parameters $\theta$ of the policy so as to maximize the objective $J(\pi_\theta) = \mathbb{E}_{x\sim d^{\pi_\theta}} \sum_{a} \pi_\theta(a|x) Q^{\pi_\theta}(x,a) $. This objective can be maximized by following the gradient
\begin{equation}
    \nabla_\theta J(\pi_\theta) = \mathbb{E}_{x_t, a_t}[ \nabla_\theta \log \pi_\theta(a_t|x_t) Q^{\pi_\theta}(x_t, a_t)] \;.
\end{equation}
Variations on this learning rule include \textit{actor-critic} methods \citep{konda2000actor}, which use a baseline given by a value-based learner to reduce update variance, and trust-region based methods, such as TRPO \citep{schulman2015trust} and PPO \citep{schulman2017proximal}. 
  
\textbf{Generalization} arises in reinforcement learning in the context of solving new reward functions \citep{dayan1993improving}, and in the case of large observation spaces or procedurally-generated environments, where some degree of generalization to new observations is necessary in order to obtain good performance at deployment \citep{kirk2021survey}. We will be concerned in this paper with the generalization gap incurred by a policy learned on a set of environments $\mathcal{E}_{\Xtrain}$ on the \textit{test} environment.
\begin{equation}
   \mathbb{E}_{\mathcal{E}_{\Xtrain}, \pi} [ \sum_{t=0}^\infty \gamma^t R(x_t, a_t)] - \mathbb{E}_{\mathcal{E}_{\Xtest}, \pi}[ \sum_{t=0}^\infty \gamma^t R(x_t, a_t)]
\end{equation}
In general, $\mathcal{E}_{\Xtest}$ will be assumed to share some structure with $\mathcal{E}_{\Xtrain}$. In large observation spaces, $\mathcal{E}_{\Xtest}$ may be equal to $\mathcal{E}_{\Xtrain}$ with a different initial state distribution, while for multi-environment problems $\mathcal{E}_{\Xtrain}$ may be homomorphic to $\mathcal{E}_{\Xtest}$ \citep{zhang2020invariant}.
A necessary condition for generalization to new observations is that a parameter update computed on a training state changes the agent's policy on the unseen test states, a phenomenon we will refer to as \textit{interference}. While interference is defined in many ways in the literature, we use it here to refer to the effect of an optimization step performed on the loss of one transition to change the agent's predicted value of other states in the environment, following a usage similar to \citet{bengio2020interference}.
\section{Related work}
\textbf{Generalization in RL.}
A number of prior works have presented methods to quantify and improve generalization in reinforcement learning \citep{igl2019generalization, raileanu2021automatic, hansen2021generalization, laskin2020reinforcement, yarats2020image,wang2020improving, cobbe2020leveraging, kenton2019generalizing}. The study of generalization in deep RL has focused principally on overfitting to limited observations \citep{song2019observational}, and generalization to novel environments \citep{farebrother2018generalization, cobbe2019quantifying}. Work on generalization in deep learning more broadly has shown that neural networks are biased towards `simple' functions, for varying notions of simplicity \citep{perez2018deep, hochreiter1995simplifying, fort2020deep, izmailov2018averaging, farnia2020fourier}. The study of  this bias in reinforcement learning tasks \citep{yang2022overcoming}, has demonstrated that the bias of neural networks to smooth functions can harm value approximation accuracy in deep RL, and proposes tuning the scale of learnable Fourier features as one means of ensuring high-frequency components of the value function are captured, an approach also followed by \citet{brellmann2022fourier}.
\citet{raileanu2021decoupling} further highlight that the process of learning a value function can induce overfitting, improving generalization to novel environments by decoupling value approximation and policy networks in actor critic architectures.

\textbf{Interference and stability.}
Off-policy temporal difference learning is not guaranteed to converge in the presence of function approximation \citep{baird1993advantage, tsitsiklis1997analysis}, the setting under which deep RL algorithms are most commonly used. The key driver of instability is interference, which has been studied in settings ranging from bandits \citep{schaul2019ray} to deep RL \citep{fedus2020catastrophic, achiam2019towards}. A number of approaches which specifically reduce the effect of a gradient update for state $s$ on the target $V(s')$ have been shown to improve the stability and robustness of these methods \citep{ghassian2020improving, lo2019overcoming}. Many prior works have also endeavoured to define and analyze interference in deep RL \citep{liu2020measuring, liu2020towards, bengio2020interference}, and to study its role in the stability of offline algorithms \citep{kumar2021dr3}.
Similarly, some recent methods \citep{shao2020self, pohlen2018observe} include an explicit penalty which discourages gradient updates from affecting the target values. 

\section{Learning dynamics and generalization}\label{sec:theory}
This section will explore a tension between learning dynamics in neural networks, which tend to `generalize-then-memorize' \citep{kalimeris2019sgd}, and the dynamics of temporal difference learning with tabular value functions, discussed in Section~\ref{sec:vf_gen}, which tend to pick up information about the value function's global structure only late in training.
We go on to study how these learning dynamics may affect the structure of gradient updates in the function approximation setting in Section~\ref{sec:fa_gen}.

\textbf{Eigendecomposition of transition operators.}
An important concept in our theoretical analysis will be that of the eigendecomposition of the environment transition matrix. We will follow the precedent of prior work in considering diagonalizable transition matrices, for which further discussion can be found in many excellent prior works  \citep{machado17a, stachenfeld2017hippocampus, mahadevan2005proto}.
\begin{figure*}
    \centering
    \includegraphics[width=0.95\linewidth]{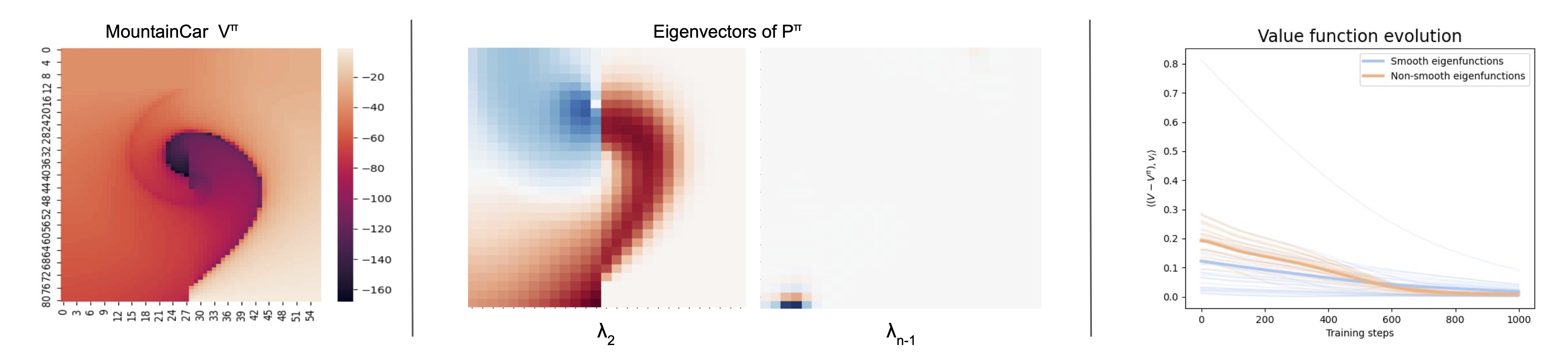}
    \vspace{-0.5em}
    \caption{Left: value function of a near-optimal policy on MountainCar. States correspond to velocity (x-axis) and position (y-axis). Middle: eigenvectors associated with this policy computed for a discretization of the MountainCar state space. Right: value approximation error by eigenbasis coefficient along a trajectory generated by tabular TD updates with learning rate $\alpha = 0.1$ on the discretized MountainCar MDP. We compare 25 of the most-smooth eigenfunctions with 25 eigenfunctions corresponding to negative eigenvalues, and normalize the error by the magnitude of the projection of $V^\pi$ onto the basis spanned by each set of vectors. Each transparent line corresponds to the dot product with a different eigenvector, while the solid lines show the mean over each subspace. }
    \label{fig:mdp}
\end{figure*}
The relationship between the smoothness of an eigenfunction and its corresponding value has been noted in prior work \citep{mahadevan2005proto,mahadevan2007proto}. However, previous discussion of this connection has defaulted to an intuitive notion of smoothness without providing an explicit definition. We provide a concrete definition of the smoothness of a function on the state space $\states$ of an MDP $\mdp$ in order to provide an unambiguous characterization to which we will refer throughout this paper.

\begin{definition}
Given a function $V : \states \rightarrow \mathbb{R}$, MDP $\mdp$, and policy $\pi$, define its expected variation $\rho(V)$ as 
\begin{equation}\rho(V) = \sum_{x \in \states} |V(x) - \mathbb{E}_{P^\pi(x'|x)}V(x')| \; . \end{equation}
We say $V$ is \emph{smooth} if $\rho(V)$ is small.
\end{definition}
This expression reveals a straightforward relationship between the eigenvalue $\lambda_i$ associated with a normalized eigenvector $v_i$ and the smoothness of that eigenvector:
\begin{equation}
    \sum_{x\in \states}|v_i(x) - \mathbb{E}_{P^\pi(x'|x)} v_i(x') | = \sum_{x \in \states} |(1-\lambda_i) v_i(x) | 
\end{equation}
In other words, the eigenvalue of an eigenvector precisely determines the variation of the eigenvector over the entire state space. If $\lambda = 1$, for example, then the eigenvector must be constant over the MDP, whereas if $\lambda = -1$, then we have $\mathbb{E}_{P^\pi(x'|x)}[V(x')] = - V(x)$ and the expected value fluctuates between extremes when stepping from one state to another. The \textit{variance} over next-state values can in principle be large even for functions of low variation by our definition, though in our empirical evaluations (see e.g. Figure~\ref{fig:mc-ff}) smooth eigenvectors tended to also exhibit little variance. For our analysis of the \textit{expected} updates performed by TD learning, we will find the smoothness of the expected updates to be a more useful quantity than the variance.

\subsection{Tabular dynamics}
\label{sec:vf_gen}
We begin by studying the learning dynamics of tabular value functions. We consider a continuous-time approximation of the dynamics followed by the value function using different update rules. Our analysis will contrast Monte Carlo updates, which regress on the value function, with Bellman updates, which regress on bootstrapped targets and correspond to the expected update performed by TD learning. For simplicity, we will ignore the state visitation distribution; analogous resulst for non-uniform state-visitation distributions are straightforward to derive from our findings. We follow the approach of \citet{lyle2021effect} in expressing the dynamics of Monte Carlo (MC) updates as a continuous-time differential equation 
\begin{equation*}
    \partial_t V_t = V^\pi - V_t
    \end{equation*}
where $V_t \in \mathbb{R}^\states$ is a function on the state space $\states$ of the MDP, resulting in the trajectory
    \begin{equation*}
    V_t = \exp ( -t)(V_0 - V^\pi) + V^\pi \, .
    \end{equation*}

Intuitively, this corresponds to a `straight line' trajectory where the estimated value function $V_t$ converges to $V^\pi$ along the shortest path in $\mathbb{R}^{\states}$. In practice, most deep RL algorithms more closely resemble temporal difference updates, which are expressed as 

\begin{align}
    \partial_t V_t &= -(I-\gamma P^\pi)V_t + R^\pi \\
    V_t &= \exp ( -t (I-\gamma P^\pi))(V_0 - V^\pi) + V^\pi \, . \label{eq:td_dynamics}
\end{align}

Whereas under Monte Carlo learning the value function converges equally quickly in all dimensions, its convergence under temporal difference learning depends on the environment transition matrix. We will consider the decomposition of a function $V : \states \rightarrow \mathbb{R}$ as a sum of the basis of eigenvectors of the MDP transition operator $P^\pi$, written $\{v_1, \ldots, v_{|\states|} \}$, obtaining $V = \sum_{i=1}^{|\states|}\alpha_i v_i$ for some (unique) set of coefficients $(\alpha_i)_{i=1}^{|\states|}$.
Under this decomposition, we can show that a predicted value function trained via TD learning will converge more slowly along smooth eigenvectors of $P^\pi$.

\begin{restatable}{obs}{convergence}\label{obs:convergence}
    Let $P^\pi$ be real diagonalizable, with eigenvectors $v_1, \dots, v_{|\states|}$ corresponding to eigenvalues $\lambda_1 > \dots \geq \lambda_{|\states|}$, and let $V_t$ be defined as in Equation~\ref{eq:td_dynamics}. Write $V_t = \sum_{i=1}^{|\states|} \alpha^t_i v_i$ to express the value function at time $t$ with respect to the eigenbasis $\{v_i\}$. Then the convergence of $V_t$ to the value function $V^\pi = \sum_{i=1}^{|\states|} \alpha^\pi_i v_i$ can be expressed as follows:
    \begin{align*}
       \alpha^t_i - \alpha^\pi_i &=  \exp(-t(1-\gamma \lambda_i)) (\alpha_i^0 - \alpha_i^\pi) \, .
    \end{align*}
\end{restatable}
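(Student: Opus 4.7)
The plan is to plug the closed-form trajectory for $V_t$ given in Equation~\ref{eq:td_dynamics} into the eigenbasis decomposition and read off the coefficients. The key tool is the fact that any analytic function of a diagonalizable matrix shares its eigenvectors, with eigenvalues transformed by the function.

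First I would note that since $P^\pi$ is real diagonalizable with eigenpairs $(\lambda_i, v_i)$, the matrix $I - \gamma P^\pi$ has the same eigenvectors $v_i$ with eigenvalues $1 - \gamma \lambda_i$. Consequently $\exp(-t(I-\gamma P^\pi))$, defined by its power series, is also diagonalizable in the same basis with eigenvalues $\exp(-t(1-\gamma\lambda_i))$; that is, for each $i$,
\begin{equation*}
    \exp(-t(I-\gamma P^\pi)) v_i = \exp(-t(1-\gamma\lambda_i)) v_i \, .
\end{equation*}

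Second, I would expand $V_0 - V^\pi = \sum_i (\alpha_i^0 - \alpha_i^\pi) v_i$ using the eigenbasis, apply the operator $\exp(-t(I-\gamma P^\pi))$ to both sides, and invoke linearity together with the previous display to obtain
\begin{equation*}
    V_t - V^\pi = \sum_{i=1}^{|\states|} \exp(-t(1-\gamma\lambda_i)) (\alpha_i^0 - \alpha_i^\pi) v_i \, .
\end{equation*}
Finally, I would compare this to the definition $V_t - V^\pi = \sum_i (\alpha_i^t - \alpha_i^\pi) v_i$, and use uniqueness of the eigenbasis expansion (which holds because the $v_i$ form a basis by diagonalizability) to equate coefficients and conclude.

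I do not anticipate a real obstacle: the one subtle point is making sure the $v_i$ form a full basis so that coefficient matching is unambiguous, which is exactly guaranteed by the diagonalizability hypothesis. The rest is simply the standard fact that matrix exponentials preserve eigenvectors, applied to the explicit solution of the linear ODE in Equation~\ref{eq:td_dynamics}.
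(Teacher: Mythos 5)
Your proposal is correct and follows essentially the same route as the paper's own proof: both diagonalize $I-\gamma P^\pi$ in the eigenbasis of $P^\pi$, apply the matrix exponential from the closed-form solution of the linear ODE, and match coefficients using the uniqueness of the basis expansion. The paper additionally remarks that without orthogonality one can only speak of coordinates rather than norms of projections (adding a stronger norm statement for symmetric $P^\pi$), but this is a side observation and your coefficient-matching argument is exactly what the stated result requires.
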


The implications of Observation~\ref{obs:convergence} on the learned value function depend to some extent on the eigendecomposition of $V^\pi$. If $V^\pi$ is equal to the constant function, then we expect the high-frequency components of $V_t$ to quickly converge to zero. If $V^\pi$ puts weight on non-smooth eigenvectors, then early values of $V_t$ may assign disproportionately high weights to these components relative to their contribution to $V^\pi$. In practice, value functions tend to exhibit a mixture of smooth and discontinuous regions, as can be seen in the illustration of a near-optimal value function in MountainCar in Figure~\ref{fig:mdp}.
The corresponding expression of $V^*$ with respect to the eigenbasis of $P^\pi$ consequently places non-zero coefficients on eigenvectors corresponding to negative eigenvalues in order to fit this discontinuity, though its spectrum is dominated by smooth eigenfunctions. 
The following result highlights that non-smooth components of a predicted value function, while contributing relatively little to the Monte Carlo error, contribute disproportionately to the TD error, providing an incentive to fit these components early in training.
\begin{figure*}
    \centering
    \includegraphics[width=0.55\linewidth]{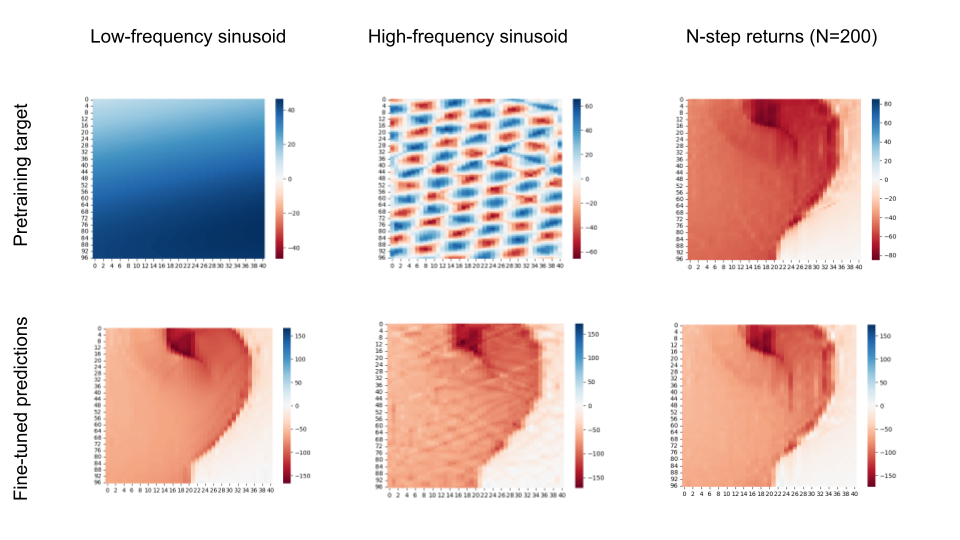}
    \includegraphics[width=0.4\linewidth]{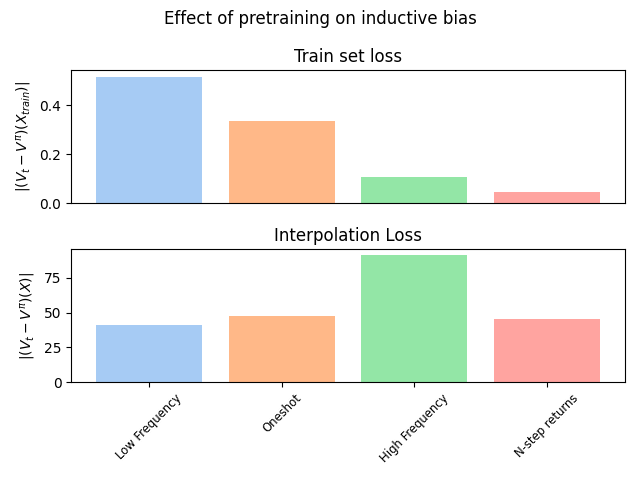}
    \caption{Networks trained to fit high-frequency target functions exhibit pathological interpolation properties when later fine-tuned on a value function. Left: visualization of pre-training targets (top) and final value estimate (bottom) after the pre-trained network is fine-tuned on the value function. Right: loss on the set of training states (top) and a finer-grained set of states which interpolate the training set (bottom) of each fine-tuned network.}
    
    \label{fig:mc-ff}
\end{figure*}

\begin{restatable}{thm}{tderror}
Let $P^\pi$ be real diagonalizable with eigenvalues $\lambda_1 > \dots > \lambda_n$ and $(v_k)_{k=1}^n$ the corresponding (normalized) eigenvectors. Then for any value function $V$, the TD error $\TD(V_t) = \|V_t - T^{\pi} V_t\|^2$ can be bounded as as
\begin{align}
    \|\TD(V_t) \|^2 &= \| T^\pi V_t - V_t \|^2\\
    &= \|\sum (1-\gamma \lambda_i)(\alpha^\pi_i -\alpha^t_i)(v_i)\| ^2  \\
    &\leq \sum_{i=1}^n (\alpha^\pi_i - \alpha^t_i)^2 (1-\gamma \lambda_i)^2  \; 
\end{align}
with equality when $P^\pi$ has orthogonal eigenvectors.
\end{restatable}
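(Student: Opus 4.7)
The plan is to reduce the TD residual to a linear transformation of the value-function error $V^\pi - V_t$, diagonalize that transformation in the $P^\pi$-eigenbasis, and then take norms. First I would use the Bellman identity $V^\pi = R^\pi + \gamma P^\pi V^\pi$ together with the definition $T^\pi V_t = R^\pi + \gamma P^\pi V_t$. Subtracting and regrouping gives
\[
T^\pi V_t - V_t = (V^\pi - V_t) - \gamma P^\pi(V^\pi - V_t) = (I - \gamma P^\pi)(V^\pi - V_t),
\]
so the TD residual is simply the operator $(I - \gamma P^\pi)$ applied to the value-function error. This is the only place in the proof where the Bellman equation is invoked.

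Next I would expand in the eigenbasis: writing $V^\pi - V_t = \sum_i (\alpha^\pi_i - \alpha^t_i) v_i$ and using $P^\pi v_i = \lambda_i v_i$, the operator $(I - \gamma P^\pi)$ scales the $i$th coefficient by $(1 - \gamma \lambda_i)$. This yields the middle equality of the theorem, $T^\pi V_t - V_t = \sum_i (1 - \gamma \lambda_i)(\alpha^\pi_i - \alpha^t_i) v_i$, and makes the qualitative message transparent: coefficients along non-smooth eigenvectors (small or negative $\lambda_i$) are amplified by a factor $(1 - \gamma \lambda_i)^2$ in the squared TD loss, while coefficients along smooth eigenvectors are damped. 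Note that Observation~\ref{obs:convergence} implies exactly those smooth components converge most slowly under the TD flow, so combining the two statements produces the implicit-regularization narrative the paper is after.

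For the final norm bound, setting $c_i = (1 - \gamma \lambda_i)(\alpha^\pi_i - \alpha^t_i)$ I expand
\[
\Big\|\sum_i c_i v_i\Big\|^2 = \sum_{i,j} c_i c_j \langle v_i, v_j\rangle.
\]
Because the $v_i$ are normalized, the diagonal of this double sum is exactly $\sum_i c_i^2$, and when $P^\pi$ has an orthogonal eigenbasis the off-diagonal Gram entries vanish, giving the equality case claimed in the theorem. The main obstacle I anticipate is justifying the $\leq$ direction in the non-orthogonal case: the off-diagonal terms $c_i c_j \langle v_i, v_j\rangle$ can have either sign and cannot be dropped for free. The cleanest ways to close the gap are either to work in the weighted inner product that makes $P^\pi$ self-adjoint (so the eigenvectors become orthogonal by construction and the stated bound becomes an equality in the modified norm), or to absorb a condition-number factor of the eigenbasis into the inequality. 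This is where I would concentrate the care of the proof; everything else is a routine unwinding of the Bellman equation and the eigendecomposition.
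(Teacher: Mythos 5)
Your proposal follows essentially the same route as the paper's proof: both reduce the TD residual to $(I-\gamma P^\pi)(V^\pi - V_t)$ via the Bellman fixed-point identity (the paper does this by adding and subtracting $\gamma P^\pi V^\pi$; you subtract the two Bellman equations directly, which is marginally cleaner), then diagonalize in the eigenbasis and take norms. The one place you diverge is that you flag the final $\leq$ step as a genuine obstacle in the non-orthogonal case, whereas the paper simply asserts it; your concern is well-founded, since for normalized but non-orthogonal eigenvectors the cross terms $c_i c_j \langle v_i, v_j\rangle$ need not be nonpositive (two nearly parallel unit eigenvectors with equal positive coefficients give $\|\sum_i c_i v_i\|^2 \approx 4 > 2 = \sum_i c_i^2$), so the inequality as literally stated requires either the weighted inner product making $P^\pi$ self-adjoint or a conditioning factor, exactly as you suggest. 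In short, your proof matches the paper's, and your caveat identifies a gap the paper's own argument leaves unaddressed.
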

Monte Carlo updates, which simply regress on the value function, give equal weight to errors along any component of the basis. These incentives provide some intuition for the different trajectories followed by Monte Carlo and TD updates: in order to minimize the TD loss, the value function must quickly become accurate along non-smooth components of the value function; however, its error due to smooth components such as the bias term of the function will have little effect on the loss and so converges more slowly. We provide an illustrative example of the relationship between the eigenvalue associated with a subspace and the convergence rate of the value function in that subspace in Figure~\ref{fig:mdp}.

\subsection{Function approximation with kernels}
\label{sec:fa_gen}
\begin{figure*}[h]
\centering
    \includegraphics[width=0.348\linewidth]{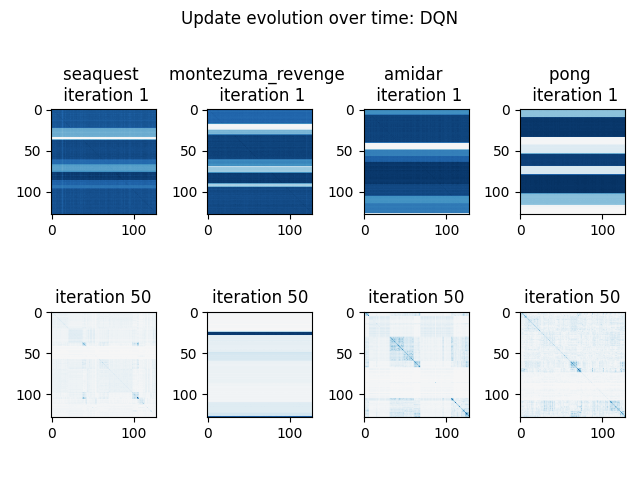}
    \includegraphics[width=0.3\linewidth]{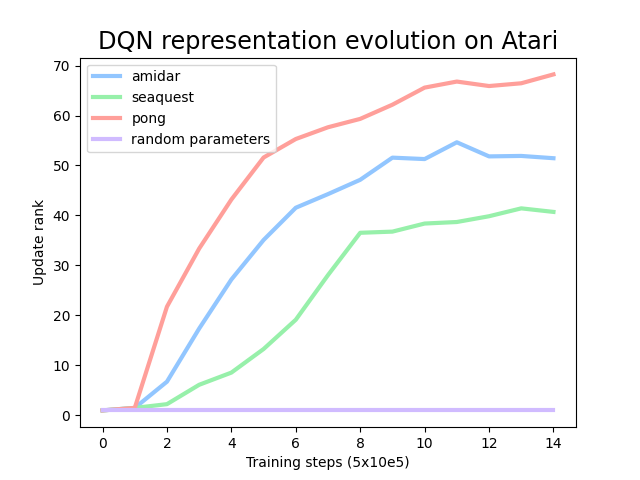}
    \includegraphics[width=0.3\linewidth]{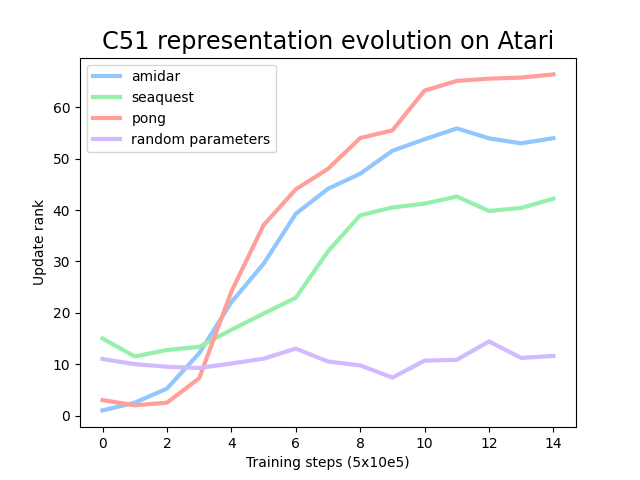}
    \vspace{-0.5em}
    \caption{Agents trained on games from Atari. The networks initially exhibit low update rank, but after 50 iterations (5M frames of experience), the updates rank increases significantly. This is tracked in the bottom plots over the course of approximately 7M frames. Random parameters refers to the update rank obtained by a randomly initialized neural network.}\label{fig:qualitative}
\end{figure*}
Most function approximation schemes leverage the assumption that states which are close together in observation space are likely to have similar values; i.e. they encode a preference towards smooth (with respect to the observations) functions. This pushes against the tendency of temporal difference updates to encourage $V_t$ to fit the components of the value function with large variation first.
To investigate this tension, we consider the \textit{kernel gradient descent} regime.

Formally, a kernel is a positive definite symmetric function $K : \mathcal{X} \times \mathcal{X} \rightarrow \mathbb{R}$. In our case, we will define $\mathcal{X}$ to be the state space of an MDP. Letting $\mathbf{x} \subseteq \mathcal{X}$, we denote by $\tilde{K}$ the (symmetric) matrix $K(\mathbf{x},\mathbf{x})$ with entries $K(\mathbf{x},\mathbf{x})_{i,j} = K(\mathbf{x}_i, \mathbf{x}_j)$. Loosely speaking, a kernel encodes the similarity between two states, allowing us to incorporate certain forms of inductive bias into the value function approximation. Importantly, the similarity of two states under $K$ does not inform us about how similar the states' initial values are, but rather how an update to the value function at one state influences the value of the other; in other words, in our setting it is a proxy for the \textit{interference} between two states.
Under kernel gradient descent, the trajectory of a function is defined in terms of a kernel $K$ and the function-space gradient of a cost function. We can translate TD semi-gradient updates into the kernel gradient descent regime as follows:
\begin{equation}
    \partial_t V_t = \tilde{K} ((\gamma P^\pi - I)V_t + R^\pi) \, .
\end{equation}
It is straightforward then to obtain analogous results as before on the convergence of $V_t$ to $V^\pi$ based on the eigendecomposition of the matrix $\tilde{K} (\gamma P^\pi - I)$ in cases where this matrix is positive definite, though many notable cases occur where this condition does not hold. This decomposition will not in general have a closed form in terms of the eigendecompositions of $\tilde{K}$ and $P^\pi$, but special cases have been studied in the setting of linear regression by \citet{ghosh2020representations} and can be related to kernel gradient descent straightforwardly as discussed in Appendix~\ref{apx:proofs}.
This setting also describes the dynamics of neural networks in the limit of infinite width \citep{jacot2021neural, fort2020deep, lee2020finite}, which follow kernel gradient descent with respect to the neural tangent kernel.

A more interesting case occurs when we assume some states in the environment are not updated during training. 
\begin{restatable}{thm}{ntk}\label{thm:ntk}
Let $K$ be a kernel and $\pi$ a fixed policy in an MDP with finite state space $X$. Let $\Xtrain \subset \states$ be a subset of states in the support of $\pi$, $\Xtest = \states \setminus \Xtrain$, and let $V_t$ be a value trajectory obtained by applying kernel semi-gradient updates on the set $\Xtrain$ to some initial value function $V_0(\Xtrain)$ with kernel $K$. Let $K_{\mathrm{all}}$ be defined as
\begin{equation}
    K_{\mathrm{all}} = K(\Xtrain, \Xtrain) \oplus K(\Xtest, \Xtrain) \, .
\end{equation}Then the trajectory of $V_t$ on the entire state space $X$ will be as follows,
    \begin{align}
        \partial_t V_t(X) &= (K_{\mathrm{all}})  [ (T^\pi V_t - V_t) (\Xtrain)]\;.
    \end{align}
\end{restatable}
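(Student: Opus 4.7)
The plan is to unpack the definition of kernel semi-gradient updates restricted to a training subset, then read off how the induced trajectory acts on the test states through the kernel.

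First, I would recall the basic fact about kernel gradient descent: a pointwise residual $\delta_i$ applied at a training state $x_i$ induces the functional update $\delta_i K(\,\cdot\,, x_i)$, so that for any query state $x$ the instantaneous change in the learned function is $K(x, x_i)\delta_i$. The TD semi-gradient prescription chooses, at each training state $x_i \in \Xtrain$, the residual $\delta_t(x_i) = (T^\pi V_t - V_t)(x_i)$, treating the bootstrap target $T^\pi V_t$ as a constant with respect to the gradient. Because no loss term is placed on states in $\Xtest$, those states contribute no residual of their own; they move only as a downstream consequence of updates at training states.

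Next, I would sum these per-state contributions over $\Xtrain$. For any $x \in \states$,
\begin{equation*}
\partial_t V_t(x) = \sum_{x_i \in \Xtrain} K(x, x_i)\, \delta_t(x_i),
\end{equation*}
and stacking this equation over $x \in \states$, separated into the $\Xtrain$ and $\Xtest$ blocks, yields
\begin{equation*}
\partial_t V_t(\states) = \begin{pmatrix} K(\Xtrain,\Xtrain) \\ K(\Xtest,\Xtrain) \end{pmatrix} (T^\pi V_t - V_t)(\Xtrain),
\end{equation*}
which matches the direct-sum notation $K_{\mathrm{all}}$ in the theorem statement.

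The main obstacle, such as it is, is purely bookkeeping: making sure that the residual signal is restricted to $\Xtrain$ (because gradient steps are only taken there) while the kernel column $K(\,\cdot\,, \Xtrain)$ acts globally to propagate that signal to both training and test states. A subtlety worth flagging is that the Bellman target $T^\pi V_t$ at a training state can still depend on $V_t$ at successor states in $\Xtest$; since this dependence is not differentiated through under the semi-gradient rule, it appears only as a value evaluation inside the residual, not as an additional update channel. Once this separation is written down carefully, no further computation is needed, and the claim follows from the linearity of kernel gradient descent in the training residual.
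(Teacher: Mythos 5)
Your proof is correct and follows essentially the same route as the paper's: the paper derives the block structure $K(\Xtrain,\Xtrain)\oplus K(\Xtest,\Xtrain)$ by writing the kernel through an explicit feature map ($\Phi_\train\Phi_\train^\top$ and $\Phi_\test\Phi_\train^\top$ acting on the training residual), which is the same computation you carry out directly in function space by summing the per-state contributions $\delta_t(x_i)K(\cdot,x_i)$ over $\Xtrain$. The subtlety you flag --- that $T^\pi V_t$ evaluated on $\Xtrain$ still depends on $V_t(\Xtest)$ through the successor states, but only as a value inside the residual and not as a differentiated update channel --- is exactly the point the paper emphasizes after its derivation.
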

A full derivation is provided in Appendix~\ref{apx:proofs}. These dynamics diverge notably from the standard kernel gradient descent regime in that changes to predictions on the test set can now influence the dynamics of $V_t$ on the training set. A large $K(\Xtest, \Xtrain)$ implies that updates to the training set carry great influence over predictions on the test set, but at the cost of increasing asymmetry in $K_{\mathrm{all}}$ when viewed as an operator on $\mathbb{R}^\mathcal{X}$. In Appendix~\ref{appx:kernel-gd} we illustrate how this asymmetry can harm stability in the case of a simple radial basis function kernel when the test states are used as bootstrap targets. 
Combining insights from \ref{thm:ntk} and Observation~\ref{obs:convergence}, we arrive at an intriguing conclusion: in the case of smooth kernels, the components of the value function most suitable to approximation via the kernel $K$ are precisely those which appear in the value estimate of the training set only later in the trajectory. As a result, the kernel does not receive the necessary information to generalize accurately to new observations. This observation runs contrary to the standard kernel regression regime, where one argument in support of early stopping is that kernel gradient descent methods converge along smooth components fastest \citep{jacot2021neural}. At the same time it is an obvious effect of bootstrapping, which requires that the agent update its predictions several times in order to propagate information about the value function through the entire input space. This effect is illustrated in Figure~\ref{fig:kernel-generalization} in Appendix~\ref{appx:kernel-gd}.

\subsection{Non-linear function approximation}\label{sec:non-linear}
The linearized dynamics followed in the neural tangent regime fail to take into account the evolution of the network \textit{features}. We now turn our attention toward the effect of temporal difference updates on the gradient structure of a function approximator by considering the second-order effects of TD semi-gradient updates under finite step sizes. We consider the system
    \begin{equation} \label{eq:discrete_dynamics}
        \theta_{t+1} \gets \theta_t + \alpha \nabla_\theta V(\theta_t) \cdot [(\gamma P^\pi - I)V(\theta_t) + R^\pi ] 
    \end{equation} 
which can be viewed as an Euler discretization of the dynamics described in Equation~\ref{eq:td_dynamics} in the presence of function approximation.
We will use the notation $f(\theta_t)$ to refer to the semi-gradient update on parameters $\theta_t$ inducing value function $V_{\theta_t}$, and write $\TD(\theta) = \frac{1}{2}\| V_\theta - \square T^\pi V_\theta \|^2$, where the $\square$ denotes a stop-gradient. This results in the following gradient flow:
\begin{equation}\label{eq:gradient-flow-theta}
    \partial_t \theta_t = \nabla_\theta V(\theta_t) \cdot [(\gamma P^\pi - I)V(\theta_t) + R^\pi ]  \; .
\end{equation}

Using the continuous-time system \eqref{eq:gradient-flow-theta}
to approximate the discrete-time system \eqref{eq:discrete_dynamics} will gradually accumulate increasing errors, proportional to $(\alpha n)^2$, as it does not take into account the effect of the discrete step size on higher-order gradients of $V_\theta$. We apply a similar analysis to that of \citet{barrett2021implicit} and \citet{ smith2020origin} to understand the effect of the discrete learning dynamics on the gradient structure of $V_\theta$ itself. We let
\begin{equation} \small \label{eq:second-correction}
    f_1(\theta) = \frac{1}{2} \nabla_\theta \| \nabla_\theta \TD(\theta) \|^2 + \gamma (\nabla_\theta ^\top V P^\pi \nabla_\theta V) f(\theta)
\end{equation}
to obtain a second-order correction describing the effect of gradient descent on the gradient structure of the learned representation.
\begin{restatable}[Second-order dynamics]{obs}{theoremsecond} \label{thm:second}
    Let $\theta_t$ be defined by the discrete-time system (\ref{eq:discrete_dynamics}) with step size $\alpha$. Let $f_1 (\theta)$ be defined as in (\ref{eq:second-correction}). Let $\ttheta_t$ denote the trajectory obtained by following the dynamics:
    \begin{align}
      \partial_t  \ttheta_t = f(\ttheta_t) + \frac{\alpha}{2} f_1(\ttheta_t)\;.
    \end{align}
    Then we have $ \theta_{n} \approx \ttheta_{n\alpha} + O( (n\alpha) ^3)$, where $\ttheta_{n\alpha}$ denotes the value of $\ttheta_t$ at time $t=n\alpha$.
\end{restatable}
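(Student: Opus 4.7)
The plan is to apply backward error analysis (the modified-equation technique of \citet{barrett2021implicit} and \citet{smith2020origin}), adapted to the semi-gradient setting in which $f$ is not the gradient of any scalar. I seek a modified vector field $F = f + \tfrac{\alpha}{2} f_1$ such that one exact step of the flow $\dot{\tilde\theta} = F(\tilde\theta)$ agrees with the discrete iteration $\theta_{n+1} = \theta_n + \alpha f(\theta_n)$ up to local error $O(\alpha^3)$, and then iterate the one-step bound by a discrete Gronwall estimate to recover the global $O((n\alpha)^3)$ bound stated.

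First I Taylor-expand the modified flow over a step of length $\alpha$. Using $\ddot{\tilde\theta}_t = J_F F$ with $F = f + O(\alpha)$,
\begin{equation*}
\tilde\theta_{(n+1)\alpha} - \tilde\theta_{n\alpha} = \alpha f(\tilde\theta_{n\alpha}) + \tfrac{\alpha^2}{2}\bigl(f_1 + J_f f\bigr)(\tilde\theta_{n\alpha}) + O(\alpha^3).
\end{equation*}
Matching to $\theta_{n+1} - \theta_n = \alpha f(\theta_n)$ under the coupling $\tilde\theta_{n\alpha} = \theta_n$ pins down the local matching condition $f_1(\theta) = -J_f(\theta)\,f(\theta)$, so the whole problem reduces to identifying $-J_f f$ with the expression for $f_1$ in the statement.

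The main algebraic step is therefore to evaluate $J_f f$ in closed form. Writing $f = J_V^\top \delta$ with $J_V = \partial_\theta V$ and $\delta = (\gamma P^\pi - I)V + R^\pi$, differentiation yields
\begin{equation*}
J_f \;=\; H_V[\delta] \;+\; J_V^\top (\gamma P^\pi - I)\, J_V, \qquad H_V[\delta] := \sum_x \delta(x)\,\nabla_\theta^2 V_\theta(x).
\end{equation*}
Because the stop-gradient gives $\nabla_\theta \TD = -f$, the chain rule furnishes $\tfrac{1}{2}\nabla\|\nabla \TD\|^2 = J_f^\top f$. Since $H_V[\delta]$ is symmetric in its $\theta$-indices while $P^\pi$ in general is not, splitting $J_f f = J_f^\top f + (J_f - J_f^\top) f$ isolates a symmetric piece equal to $\tfrac{1}{2}\nabla\|\nabla \TD\|^2$ from an antisymmetric residue supported on $\gamma J_V^\top \bigl(P^\pi - (P^\pi)^\top\bigr) J_V f$. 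The latter is the non-variational correction absent from true gradient descent; rearranging it into the single $\gamma(\nabla_\theta^\top V\, P^\pi\, \nabla_\theta V)\,f$ term quoted in \eqref{eq:second-correction} is the step that demands careful bookkeeping of signs and transpositions.

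The hardest part is exactly this algebraic identification: the semi-gradient Jacobian is structurally non-symmetric and the stop-gradient breaks the potential-function structure that underpins the Barrett-Dherin derivation for true gradient descent, so the standard implicit-regularization term $-\tfrac{1}{2}\nabla\|f\|^2$ picks up an additional bootstrap-dependent contribution through $P^\pi$. Once this identification is in place, the global bound is routine: assuming local Lipschitzness of $f$, $f_1$ and their Jacobians along the trajectory, the per-step $O(\alpha^3)$ truncation error accumulates by a standard discrete Gronwall argument to give $\|\theta_n - \tilde\theta_{n\alpha}\|$ of the stated order on any bounded time interval $n\alpha$.
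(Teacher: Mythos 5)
Your overall strategy---backward error analysis in the style of Barrett--Dherin, one-step Taylor matching of the modified flow against the discrete iterate, then a discrete Gronwall accumulation---is the same as the paper's, and your computation of the semi-gradient Jacobian $\nabla_\theta f = H_V[\delta] + (\nabla_\theta V)^\top(\gamma P^\pi - I)\nabla_\theta V$ is correct. The global accumulation step is in fact handled more explicitly than in the paper, which only argues the one-step error.

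However, the step you yourself flag as the crux---recovering the specific form of $f_1$ in \eqref{eq:second-correction}---uses the wrong decomposition and does not go through. Write $S = H_V[\delta] - (\nabla_\theta V)^\top\nabla_\theta V$ and $B = \gamma(\nabla_\theta V)^\top P^\pi\,\nabla_\theta V$, so that $\nabla_\theta f = S + B$. The term $\tfrac12\nabla_\theta\|\nabla_\theta\TD\|^2$ in \eqref{eq:second-correction} is the gradient of the gradient-norm penalty of the \emph{frozen-target} regression problem, i.e.\ $(\nabla^2_\theta\TD)(\nabla_\theta\TD) = (-S)(-f) = Sf$; it is \emph{not} $(\nabla_\theta f)^\top f = Sf + B^\top f = \tfrac12\nabla_\theta\|f\|^2$, because $\|f\|^2$ and $\|\nabla_\theta\TD\|^2$ agree pointwise at the stop-gradient anchor but have different $\theta$-gradients, differing by exactly $B^\top f$. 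The paper's decomposition is therefore ``frozen-target symmetric piece plus the \emph{full} bootstrap term,'' $\nabla_\theta f\cdot f = Sf + Bf$, with the second summand $\gamma(\nabla_\theta V)^\top P^\pi(\nabla_\theta V)f$ and not the antisymmetrization $(B - B^\top)f$ that your symmetric/antisymmetric split produces. No amount of sign or transposition bookkeeping turns $B - B^\top$ into $B$, so your route cannot terminate at \eqref{eq:second-correction}; the fix is to move $B^\top f$ out of your ``symmetric piece'' and into the bootstrap term, which amounts to adopting the paper's split rather than yours. Separately, your matching condition $f_1 = -\nabla_\theta f\cdot f$ carries the opposite sign to the paper's own computation $f_1 = +\nabla_\theta f\cdot f$; your sign is the one actually required for the flow $f + \tfrac{\alpha}{2}f_1$ to track the discrete iterates (the paper's proof silently drops a minus sign at this point), but it means the expression you would ultimately derive is the negative of \eqref{eq:second-correction}, compounding the mismatch.
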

\begin{figure*}[h]
    \centering
    \includegraphics[width=\linewidth]{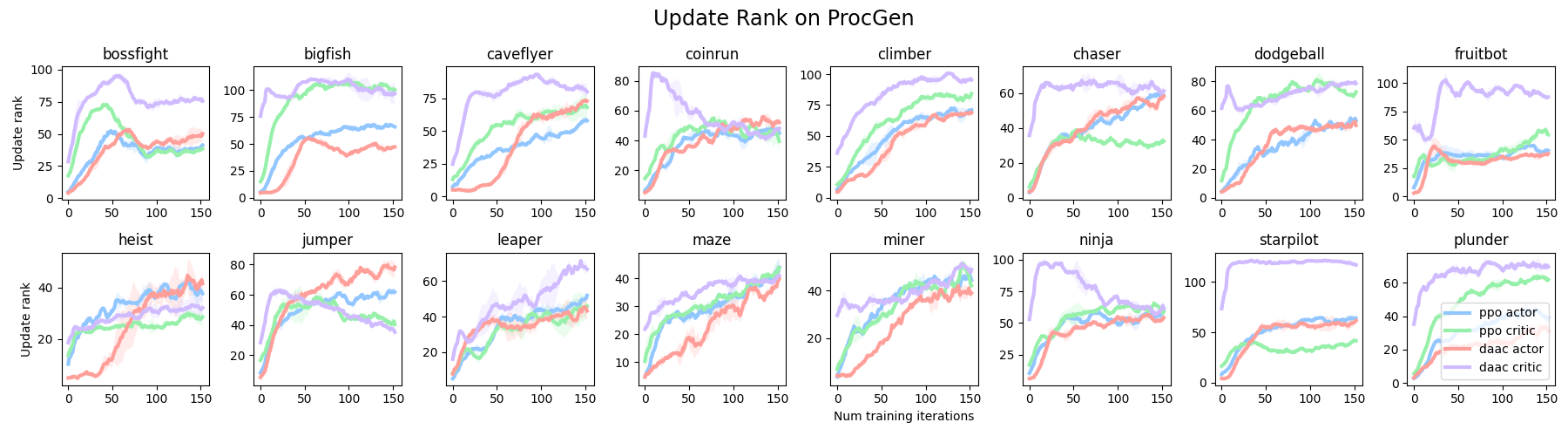} \vspace{-2em}
    \caption{Update dimension of actor-critic methods in ProcGen. Shading indicates minimum and maximum values over 4 seeds. We observe that the update dimension of the separate critic architecture in DAAC (the lilac line) consistently has the highest update rank early in training, while the actors have the lowest rank in the early training stages and only surpass the DAAC critic later in training.}
    \label{fig:daac}
\end{figure*}
The form of $f_1$ constructed in Equation~\ref{eq:second-correction} consists of two terms: a semi-gradient norm penalty term with respect to the instantaneous TD error, and a gradient dot product term which discourages negative interference between nearby states. Since early in training the TD error will tend to be relatively discontinuous and less structured than the true value function (see e.g. Figure~\ref{fig:mc-ff}), the gradient norm penalty will have the effect of discouraging interference between states. Figure~\ref{fig:qualitative} illustrates how fitting highly variable targets early in training can discourage a neural network from smoothly generalizing between states. We observe a similar phenomenon in deep RL agents in Figure~\ref{fig:qualitative}, where networks trained in dense-reward games (whose early TD targets will exhibit greater variation) exhibit weaker interference after training than they did at initialization.

In combination, the findings of this section suggest that the dynamics of temporal difference learning work to discourage interference between states in deep RL by fitting high-frequency components of the value function early in training while also encouraging flatness in parameter space. While this may result in more stable learning, as highlighted in Theorem~\ref{thm:ntk}, it has the double-edged effect of reducing the degree to which the network can generalize to novel observations. The following section will leverage these results to gain insight into deep RL agents trained in rich-observation environments. 

\section{Generalization and interference in deep RL}
\label{sec:rank-exps}
We now explore how TD learning dynamics influence the representations and learned update structure of deep RL agents.
We begin by presenting a quantitative approach to measure the degree to which interference is occurring between states in the agent's visitation distribution. Armed with this metric, we evaluate the following hypotheses. First, that deep neural networks trained with TD updates will exhibit weaker interference between states as training progresses compared to their value at initialization (\textbf{H1}). Second, we conjecture that networks trained with TD learning will exhibit weaker interference than those trained with policy gradient objectives (\textbf{H2}).

\subsection{Representation evolution in value-based RL}

We begin by developing intuition into how the representations learned by deep RL agents evolve over time. Given a set of transitions $\tau_1, \dots, \tau_n$ of the form $\tau_i=(x_i, a_i, r_i, x'_i)$ and a value function $V$ with parameters $\theta$, we let $\theta_i$ denote the network parameters after performing an optimization step with respect to the transition $\tau_i$. We then construct a matrix $\updatematrix$ entry-wise as follows:
\begin{equation}
    \updatematrix_{i,j} = V_{\theta_j}(x_i) - V_\theta(x_i) \, .
\end{equation}
See Figure~\ref{fig:qualitative} for an illustration. The properties of this matrix will depend on the optimizer used to perform updates, leading to notable differences from empirical NTK matrix given by gradient dot products under the current parameters \citep{jacot2021neural}. In the case of stochastic gradient descent, the matrix of gradient dot products will yield a first order approximation of the update matrix $A$.
However, many popular optimizers leverage momentum terms and adaptive step sizes, resulting in updates that diverge significantly from the dot product between gradients. In order to account for this effect, we consider the change in the loss or network output directly. 

At one extreme, the update matrix $A$ for a tabular value function will have non-zero entries only along the diagonal and the matrix will have full rank. At the other, if the value function were represented by a single parameter $\theta \in \mathbb{R}$, then every row will be identical and the matrix will have rank one. Thus, the rank of this matrix can be interpreted as a proxy for whether an agent tends to \textit{generalize} updates between states (low rank), or whether it \textit{memorizes} the value of each state-action pair independently from other states (high rank). In our evaluations, we use an approximate version of the rank that discards negligible components of the matrix based on the singular value decomposition, described in more detail in Appendix~\ref{apx:details}. We will refer to this quantity as the \textit{update rank}. An alternative approach outlined by \citet{daneshmand2021batch} involves computing the Frobenius norm of the difference between the matrix $A$ and the identity, however this may overestimate interference in optimizers which use momentum due to constant terms in the update matrix. 

We proceed to evaluate \textbf{H1} by measuring the update rank of deep RL agents trained on popular benchmarks. We train a standard deep Q-network (DQN) architecture on environments from the Atari 2600 suite, and save checkpoints every 10 million frames. We begin by visualizing the evolution of agents' update matrices over the course of training in Figure~\ref{fig:qualitative}. RL agents trained in dense-reward environments tend to develop update matrices which resemble those of tabular value functions. Those trained in the absence of reward, i.e. those for which the target value function has no high-frequency components, maintain low-rank update matrices through training as our theory would predict. We find that similar results hold for a range of update rules, including distributional updates performed in the C51 algorithm \citep{bellemare2017distributional}. We include further evaluations in Appendix~\ref{apx:more-results}.

\subsection{Actor-critic methods}

Policy gradient methods present an opportunity to avoid the pathologies discussed previously in temporal difference targets while still preserving other properties of the RL problem. While these mehtods tend to exhibit other pathologies, in particular suffering from high variance, there is no reason to expect a priori that this variance will discourage interference in the same way as in TD updates. We investigate \textbf{H2} using two different algorithms on the ProcGen suite: PPO \citep{schulman2017proximal}, which uses a shared representation network for both the actor and critic, and DAAC \citep{raileanu2021decoupling}, where there are no shared parameters between the actor and the critic. This setup allows us to study both the effect of the TD loss on a network's update dimension, and long-term effect of TD gradients on the representation. We run our evaluations in the ProcGen environment \citep{cobbe2019quantifying}, which consists of 16 games with procedurally generated levels. While the underlying mechanics of each game remain constant across the different levels, the layout of the environment may vary. The agent is given access to a limited subset of the levels during training, in this case 10, and then evaluated on the full distribution. We will discuss generalization to new levels in the following section; our initial analysis will focus on generalization between observations in the training environments. 

We evaluate the update dimension of the actor and critic networks of each method in Figure~\ref{fig:daac}. Omitting the critic gradients from the actor's representation leads to significantly lower update dimensions early in training in a number of environments, including bigfish, heist, and miner. Further, the critic network in DAAC, which receives only TD gradients, exhibits markedly higher update rank in all environments in at least the early stages of training, and often throughout the entire trajectory, than the other networks which have access to the actor gradients. 
\begin{figure}
    \centering
    \includegraphics[width=0.939\linewidth]{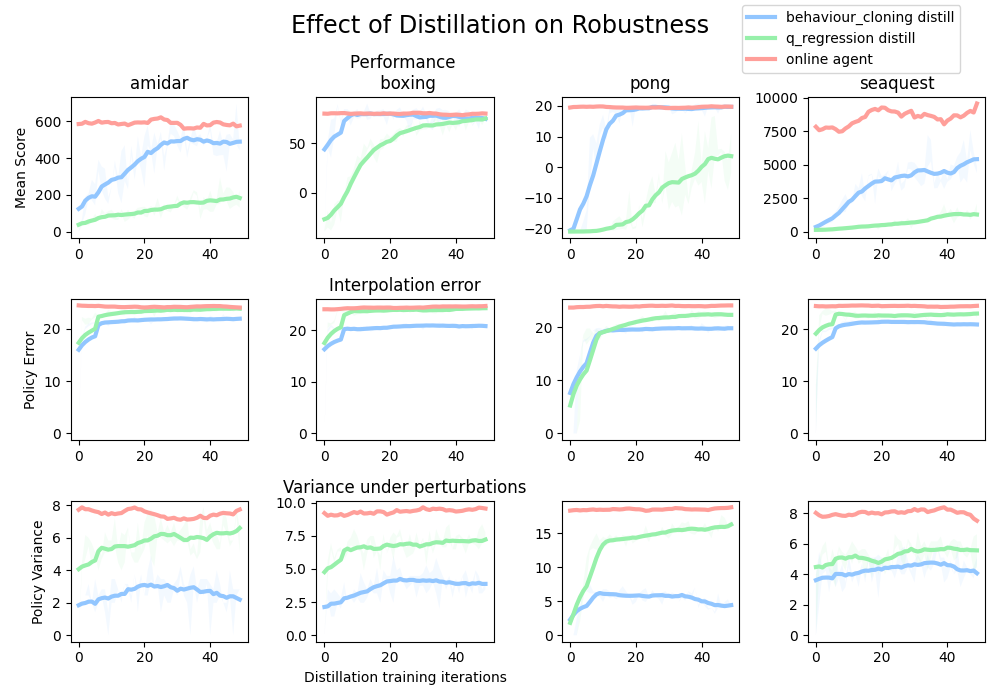}
    \caption{Performance and robustness to perturbations of different distillation approaches in games from the Atari suite. Post-training distillation results in policies that are more consistent under perturbations and under interpolation between observations. Axes indicate the $\ell_1$ norm between the policy on the original input batch and on the perturbed input batch.} \vspace{-1em}
    \label{fig:atari-gen}
\end{figure}

\section{Post-training distillation and generalization}
The previous sections have shown that TD learning dynamics discourage interference, and that while this may have a beneficial effect on stability during training, it can reduce the ability of the network to generalize to new observations. This bias towards memorization arises when, during the network's crucial early development stage, it is trained to fit target functions that do not capture the global structure of the value function. One simple solution to this problem is to train a freshly initialized network on the final value function obtained by TD learning. If the learned value function was able to pick up on the global structure of the value function, then the freshly initialized network will be able to benefit from incorporating this structure into its predictions more systematically than the teacher. Similar approaches have seen success in improving generalization \citep{igl2019generalization} and allowing agents to break through plateaus \citep{fedus2020catastrophic}. The goal of this section is to provide a deeper empirical study into how post-training distillation influences the interference and generalization of the final learned value function.
\begin{figure*}[h]
\centering
    \includegraphics[width=0.97\linewidth]{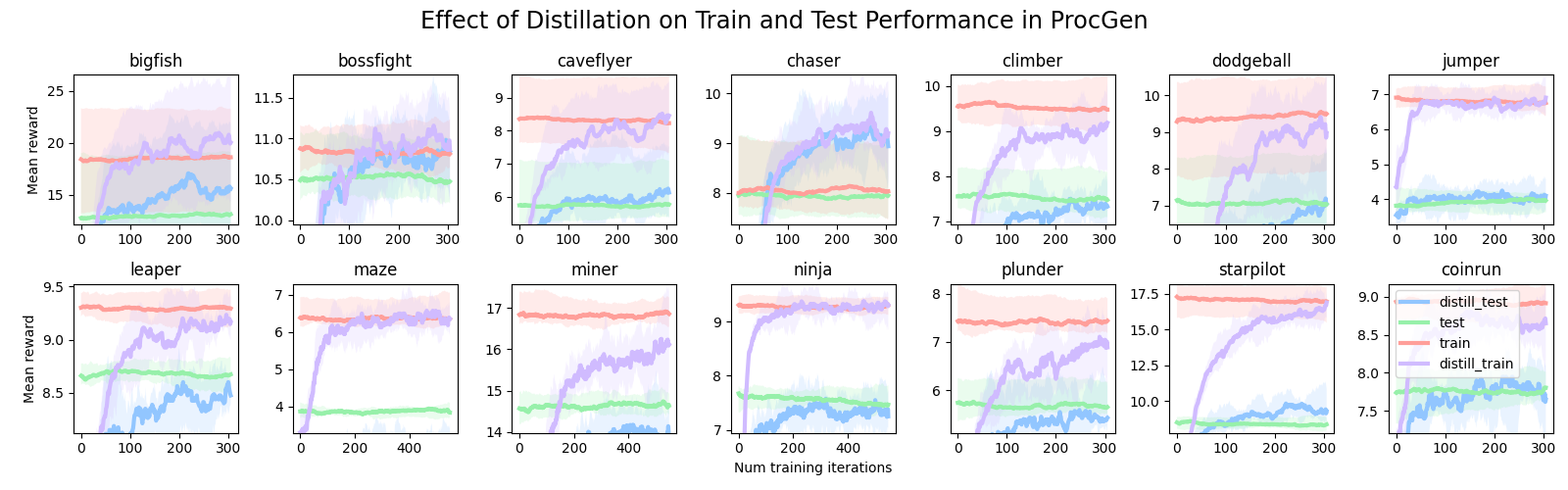}\vspace{-0.5em}
    \caption{Effect of policy distillation on generalization in environments from the Procgen suite. We plot the pretrained networks train environment and test environment performance, along with the performance of the distilled agent on the test environments. We see significant improvement on test environments in bigfish, caveflyer, chaser, climber, and bossfight. }
    \label{fig:procgen-distill}
\end{figure*}
\subsection{Value distillation}
We first consider value distillation as a means of eliminating the counterproductive bias towards memorization induced by early TD targets. We leverage a data collection policy from a pre-trained teacher network $q_t$, and perform distillation of a freshly initialized network $q_s$ on this data. We follow a similar procedure to that of \citet{ostrovski2021the} to perform distillation of the function $q_{\mathrm{s}}$ on data collected sampled from the teacher's replay buffer $\mathcal{B}_T$, leveraging their insight that distillation on \textit{all} action values, rather than only the value of the action taken by the teacher agent, yields significantly higher performance.  We additionally study the effect of behaviour cloning with entropy regularization, obtaining the objectives
\begin{align}\label{eq:value_distill}
    \ell_{\mathrm{VD}}(q_{\mathrm{S}}, q_{\mathrm{T}}) &= \mathbb{E}_{s \sim \mathcal{B}_{\mathrm{T}}} \bigg [\sum_{a \in \mathcal{A}} (q_{\mathrm{S}}(a) - q_{\mathrm{T}}(a) )^2 \bigg ]\\
    \ell_{\mathrm{BC} }(\theta) &= \mathbb{E}_{s,a\sim \mathcal{B}_{\mathrm{T}}}[ \log \pi_\theta(s,a) + \lambda H(\pi_\theta(s)] \label{eq:policy_distill}
\end{align}
where $H(\cdot)$ denotes the entropy of the policy. We set $\lambda = 1e-2$ in our evaluations.
We show results for value distillation \eqref{eq:value_distill}, which regresses on the outputs of the frozen Q-network, and behaviour cloning \eqref{eq:policy_distill}, which predicts the action taken by the frozen Q-network.
We track three quantities: the performance of the learned policy, the robustness of the learned policy to perturbations, and the consistency of the learned policy when interpolating between observations. The performance is measured by following an $\epsilon$-greedy policy in the training environment, with $\epsilon=0.01$. The robustness to perturbations is measured by tracking whether the network takes the same action under a Gaussian perturbation to its input as in the unperturbed observation. Finally, we investigate the network's interpolation behaviour by evaluating whether, given a convex combination of observations $o_1$ and $o_2$, the network takes the same action under the combination as it does in either of the original observations.

Figure~\ref{fig:atari-gen} shows that the distilled networks are more robust to perturbations and are more consistent under interpolations between observations. We observe that the behaviour cloning method matches or nearly matches the performance of the pretrained agent in three of the four environments, while also obtaining the best robustness. Both behaviour cloning and value distillation improve upon the teacher network that was trained online. We conclude that while value distillation can mitigate some of the effect of TD methods on interference, policy-based methods exhibit better robustness properties. This finding motivates the next section, where we will dig deeper into policy distillation.

\subsection{Policy distillation}
This paper has been predominantly concerned with interference, a phenomenon defined on states the agent has encountered during training. Given the success of policy distillation to improve robustness, we now evaluate whether distillation may also improve generalization to novel environments. We return to the ProcGen benchmark, with the hypothesis that post-training distillation of PPO agents should produce policies which improve on the ability of the final trained actor to generalize to new levels, provided that the levels generated for the test set are sufficiently similar to the training distribution that such generalization is feasible. 
We use the same PPO agents as in Figure~\ref{fig:daac} as teacher agents in our evaluations. We train a freshly initialized network (the distillation agent) to minimize the KL divergence with the teacher's policy $\pi_T$ on transitions collected by the teacher and stored in a buffer $\mathcal{B}_T$. Concretely, we minimize a stochastic estimate of the following objective:
\begin{equation}
  \mathbb{E}_{s \sim \mathcal{B}_{T}} [ D_{KL}(\pi_S(s) ||\pi_T(s)) + \lambda H(\pi_S)] \; .
\end{equation}
We then evaluate the distilled agent's performance on the test environments. Results are shown in Figure~\ref{fig:procgen-distill}.

We find that post-training distillation consistently meets or improves upon the generalization \textit{gap} obtained by the original network, in many environments significantly improving on the final test set performance of the PPO agent. We attribute this improvement to the absence of TD learning gradients in the distillation process and the stationarity of the distillation targets, avoiding the pitfalls of non-stationarity highlighted by \citet{igl2020transient}. It is likely that the raw performance obtained obtained by the student could be improved using lessons from the policy distillation literature \citep{czarnecki2019distilling, rusu2016policy, teh2017distral}.
Given these limitations, post-training distillation stands out all the more as a simple and effective way to improve generalization performance. It is further notable that the effect of distillation varies significantly between environments, suggesting that the relationship between interference and out-of-environment generalization depends heavily on the structure of the environment-generating process.

\section{Conclusion}

Our analysis has shown that temporal difference learning targets converge along non-smooth components of the value function first, resulting in a bias towards memorization when deep neural networks are employed as function approximators in value-based RL. In the context of prior work demonstrating that weaker generalization can improve the stability and convergence of RL algorithms, this phenomenon may be beneficial to an agent's stability, but comes at the cost of observational overfitting. We further show that post-training distillation improves generalization and robustness, mitigating some of the tendency of value-based RL objectives to encourage overfitting.
This work presents a crucial first step towards the principled development of robust and stable deep RL algorithms which are nonetheless capable of strong generalization performance. Our insights may prove useful in a range of future directions, such as using different architectures during training and distillation, leveraging larger neural network function approximators to minimize harmful interference, and modifying the update rule used in TD learning to adaptively promote or inhibit interference between inputs. Further, the role of the optimiser is fundamental to the phenomena studied in this paper, and RL-specific optimization approaches may benefit from our findings.
\clearpage
\section*{Acknowledgements}
MK received funding from the ERC under the European Union’s Horizon 2020 research and innovation programme (FUN2MODEL, grant agreement No. 834115). CL is funded by the Open Philanthropy AI Fellowship. Thanks to Georg Ostrovski for provided valuable feedback on an early draft of this paper, along with Maximilian Igl and Charline Le Lan for helpful discussions.

\bibliographystyle{plainnat}
\bibliography{references}

\newpage
\appendix
\onecolumn

\section{Proofs}
\subsection{Characterizing smoothness in MDPs}
\label{apx:smooth}
Throughout the text, many references are made to 'smooth' functions, without giving a strict definition. While this is useful to convey a rough idea of the types of functions we are interested in, we provide a more rigorous discussion in this section. First, we distinguish between smoothness with respect to a notion of distance in the observation space, for example $\ell_2$ distance between vectors, and distance with respect to the MDP's transition dynamics, which measures how easily the agent can reach one state from another. In most settings of interest, the two definitions will largely agree, motivating our use of the generic term smoothness in our discussion of neural network function approximation. In these cases, the inductive bias of the neural network towards smooth functions over the observation space corresponds to an inductive bias twoards functions that are smooth with respect to the MDP transition dynamics. However, this may not always be the case. For example, when walking through a door that leads from one level to another in a video game; though the last frame of the old level and the first frame of the new one may be visually distinct, they will have low distance in the MDP. 

The notions of smoothness we refer to in Section~\ref{sec:vf_gen} relates to the variation of the value function between adjacent states in time. This definition resembles graph total variation \citep{tovsic2010graph}, which characterizes the degree to which a node's value differs from the average of its neighbours. In our case, we treat the transition matrix $P^\pi$ as a weighted directed graph, and will be interested in the quantity $|V(x) - \mathbb{E}_{P^\pi(x'|x)}[V(x')] |$. We note trivially that if $V$ is an eigenvector of $P^\pi$ with eigenvalue $\lambda$, then
\begin{equation}
    \sum_x |V(x) - \mathbb{E}_{P^\pi(x'|x)} V(x') | = \sum_x |(1-\lambda) V(x) | 
\end{equation}
In other words, the eigenvalue of an eigenvector precisely determines the variation of the eigenvector over the entire state space. If $\lambda = 1$, for example, then the eigenvector must be constant in expectation over the MDP, whereas if $\lambda = -1$, then we have $\mathbb{E}_{P^\pi(x'|x)}[V(x')] = - V(x)$ and the value fluctuates between extremes when stepping from one state to another. We obtain an analogous result if, rather than taking the max over states, we take a weighted average or a sum. 

\subsection{Proofs of main results}
\label{apx:proofs}
\convergence*

\begin{proof}
Recall we assume the following dynamical system

\begin{align*}
    \partial_t V_t &= -(I - \gamma P^\pi)V_t + R 
    \intertext{Inducing the trajectory}
    V_t &= \exp( - t(I - \gamma P^\pi)) (V_0 - V^\pi) + V^\pi 
    \intertext{As we assume $P^\pi$ is diagonalizable, this implies that $(I - \gamma P^\pi)$ is also diagonalizable. Let $u_1, \dots, u_n$ denote the right eigenvectors of $P^\pi$ with corresponding eigenvalues $\lambda_1 \ge \dots \ge \lambda_n$. Let $V_0 = \sum \alpha^0_i u_i$. }
    V_t &= \sum \alpha_i^t u_i \\
    &= \exp (-t(I - \gamma P^\pi)) (\sum \alpha_i^0 - \alpha^\pi_i u_i) + \sum \alpha^\pi_i u_i \\
    &= \sum  \exp(-t(1-\gamma \lambda_i)) \bigg ( \sum  (\alpha_i^0 - \alpha_i^\pi  ) u_i + \sum \alpha_i^\pi u_i \bigg )
    \intertext{Now, we consider the value of $V_t - V^\pi$ along each coordinate. Note that we have not assumed an orthogonal eigenbasis, thus cannot speak directly to the norm of the projection of this difference onto the eigenspace corresponding to each eigenvector $\lambda_k$. However, treating the eigendecomposition as a basis, we can discuss how the coordinates $\alpha^t_i$ of the value function $V_t$ converge with respect to this basis.}
    |V_t - V^\pi|[i] = |\alpha^t_i - \alpha^\pi_i| &=   | \exp(-t(1-\gamma \lambda_i)) (\alpha_i^0 - \alpha_i^\pi) + \alpha_i^\pi  - \alpha^\pi_i| \\
    &=|\exp(-t(1-\gamma \lambda_i))  (\alpha_i^0 - \alpha_i^\pi) | = \exp(-t(1-\gamma \lambda_i)) | (\alpha_i^0 - \alpha_i^\pi) | 
\end{align*}
We conclude by noting that for large values of $\lambda_i$, the exponential term $\exp ( - t(1 - \gamma \lambda_i)) $ will decay more slowly as a function of $t$ than for smaller values of $\lambda_i$. Thus, these coordinates (which correspond to non-smooth functions over the state space) will converge fastest. When the eigenvectors form an orthogonal basis, as is the case for symmetric $P^\pi$, we can go further and observe that this convergence will apply to the norm of the projection of the value function into the corresponding eigenspace.   Thus for symmetric $P^\pi$, we obtain the following stronger convergence result, where $U_k$ denotes the eigenspace corresponding to the eigenvalue $\lambda_k$.
\begin{equation}
    \| \Pi_{U_k} ( V_t - V^\pi ) \| = \exp(-t (1 - \gamma \lambda_k)) \| \Pi_{U_k} (V_0 - V^\pi ) \|
\end{equation}
\end{proof}
\tderror*
Let $V_0 = \sum  \alpha_i v_i$. Then, letting $V_t$ be defined as in Equation~\ref{eq:td_dynamics}.
\begin{equation}
\TD(V_t) \leq \sum_{i=1}^n \exp(-2t(1-\gamma \lambda_i))( \alpha^\pi_i -  \alpha_i^0)^2 (1-\gamma \lambda_i)^2 \; .
\end{equation}
\begin{proof}

By our assumption on the diagonalizability of $P^\pi$, we can leverage the previous result on the coordinates of $V_t$.
\begin{align*}
    V_t - V^\pi &=  \sum  \exp(-t(1-\gamma \lambda_i)) \bigg ( \sum  (\alpha_i^0 - \alpha_i^\pi  ) u_i \bigg ) \\
    \intertext{We then bound the TD error as follows.}
    \|V_t - \gamma P^\pi V_t - R \|^2 &= \| V_t - \gamma P^\pi V_t + \gamma P^\pi V^\pi -\gamma P^\pi V^\pi -R \| \\
    &= \| V_t - \gamma P^\pi V^\pi - R -  \gamma P^\pi (V_t - V^\pi) \| \\
    \intertext{Since $V^\pi = R + \gamma P^\pi V^\pi$, we obtain the following.}
    &= \| (I -  \gamma P^\pi) (V_t - V^\pi )\|^2 \\
    &= \| \sum (1 - \gamma \lambda_k) (\alpha_i^t - \alpha_i^\pi) u_i \|^2 \\
    &\leq \sum (\alpha^\pi_i - \alpha^t_i)^2(1-\gamma \lambda_i)^2
\end{align*}
The remainder follows a straightforward substitution.
\end{proof}

\theoremsecond* 

\begin{proof}
While our prior analysis has considered the continuous time system $\ttheta_t$, this does not perfectly approximate the discrete system $\theta_t$. When a fixed step size is used, the first-order continuous-time approximation accrues error roughly proportional to $\alpha t$. We then follow the procedure of \citet{barrett2021implicit}, applying a Taylor expansion to the evolution of $\ttheta_t$ with respect to time. We will use the notation $\ttheta(t)$ to denote the explicit dependence of $\ttheta$ as a function of time.
\begin{align}
    \ttheta(\alpha t) &= \ttheta(0) + \sum \frac{(\alpha t)^n}{n!} \theta^{(n)}(0) \\
    &= \ttheta(0) + \alpha t f(\ttheta(0)) + \frac{(\alpha t)^2}{2} \nabla_\theta f \cdot f(\ttheta(0)) +O(\alpha^3)\\
    &= \ttheta(0) + \alpha t f(\ttheta(0)) + \frac{(\alpha t)^2}{2} f_1(\theta(0)) +O(\alpha^3) \\
    \intertext{Relating this back to the discrete system $\theta_t$}
    \theta_{1} &= \theta_0 + \alpha f(\theta_0) = \ttheta(0) + \alpha f(\ttheta(0)) \\
    \theta_1 &= \ttheta(1) - \frac{\alpha^2}{2}f_1(\ttheta(0)) + O(\alpha^3)
    \intertext{Thus, the system $\partial_t \check{\theta}_t = f(\check{\theta}_t) + \alpha^2/2 f_1( \check{\theta}_t)$ satisfies}
    \theta_1 &= \check{\theta}(1) + O(\alpha^3)
\end{align}

We begin by observing that $\nabla_\theta \| V_\theta - \square T V_\theta \|^2 = (V_\theta - T V_\theta) \cdot \nabla_\theta V_\theta = f(\theta)$. 
\begin{align}
   \theta_{} &= \theta_0 + \alpha n f(\theta_0) + (\alpha n) ^2/2 \nabla_\theta f (\theta_0) \cdot f(\theta_0) + O( (\alpha n)^3)  \\
   &= \theta_0  + \alpha n f(\theta_0) + \frac{(\alpha n)^2}{2} f_1(\theta_0) + O((n\alpha)^3)\\
   \intertext{We then express $f_1(\theta)$ as follows.}
   f_1(\theta_0) &= \nabla_\theta[ f (\theta_0)] \cdot [f(\theta_0)] \\
   &= [\nabla^2_w V_\theta \cdot ((\gamma P^\pi - I)V_\theta + r) + \nabla_\theta V_\theta \cdot ((\gamma P^\pi - I) \nabla_\theta V_\theta)][f(\theta)] \\
   &= [\nabla_\theta^2 V_\theta \cdot ( (\gamma P^\pi - I)V_\theta + r) + \nabla_\theta V_\theta \cdot \nabla_\theta V_\theta][f(\theta)] + \gamma [\nabla_\theta V_\theta P^\pi \nabla_\theta V_\theta][f(\theta)] \\
   \intertext{Noting that the left hand side term is equal to the gradient of the gradient norm penalty for the stop-gradient version of the TD regression problem, we simplify as follows:}
   &= \frac{1}{2}\nabla_\theta \| \nabla_\theta \frac{1}{2}\|V_\theta - \square T^\pi V_\theta\|^2 \|^2 + \gamma [\nabla_\theta V_\theta \cdot P^\pi \cdot \nabla_\theta V_\theta][f(\theta)]
\end{align}
We note that, unlike in the stochastic gradient descent setting, $f_1$ does not correspond to a gradient of any function. Instead,  it corresponds to the second-order we would get for a frozen target, which corresponds to a gradient norm penalty, plus a term that measures the alignment of the gradients at each state and its expected successor. Intuitively, both of these terms minimize the `variance' in the loss induced by noisy, discrete gradient steps. The flatter loss surfaces induced by the gradient norm penalty will naturally lead to greater robustness to parameter perturbations. The gradient alignment term reflects the observation previously that non-smooth functions contribute the most to the TD error, and so encourages the first-order gradient effects on successive states to move in a similar direction. 

We note that this final observation seems to be at odds with the tendency for TD learning to encourage more tabular updates. Why would a second-order correction term which promotes flat minima and gradient alignment result in tabular updates? To answer this, we point to the tendency of TD targets to converge along. the non-smooth components of the value function first. We are therefore faced with finding a flat region of parameter space to fit a discontinuous function. A representation which succeeds at this will benefit from minimizing interference between states, as the gradients for one transition will be on average uncorrelated with even nearby other states. The gradient alignment penalty suggests that, while the implicit regularization will prefer flat minima, smooth interference patterns which move other states in a similar direction to the current state will be penalized less than non-smooth directions.
\end{proof}

\begin{restatable}{cor}{corr-second}
    The second-order dynamics push features towards precisely the worst direction w.r.t. stability. I.p. looking at the set of positive definite representations introduced by \citet{ghosh2020representations} we see
    \begin{equation}
        \{v : v^\top P^\pi v <  \gamma^{-1} \|v\|_{\Xi} \}
    \end{equation}
    whereas the optimal gradients for the second order term implicitly solve the following optimization problem
    \begin{equation}
        \min  \mathbb{E}_{x \sim \eta(x)}[g(x)^\top g(x) - \gamma g(x)^\top (P^\pi g)(x)]
    \end{equation}
    
\end{restatable}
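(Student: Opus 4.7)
The plan is to start from the expression for $f_1(\theta)$ already derived in the proof of Observation~\ref{thm:second} and reinterpret its two pieces at the level of the per-state gradient field $g_\theta(x) := \nabla_\theta V_\theta(x)$. One piece is the flatness term $\frac{1}{2}\nabla_\theta \|\nabla_\theta \TD(\theta)\|^2$; the other is the genuinely RL-specific gradient-alignment term $\gamma (\nabla_\theta V^\top P^\pi \nabla_\theta V)\, f(\theta)$. The substance of the corollary is that this combination drives $g_\theta$ in the descent direction of the functional
\[
\mathcal{J}(g) \;=\; \mathbb{E}_{x \sim \eta}\bigl[\, g(x)^\top g(x) \;-\; \gamma\, g(x)^\top (P^\pi g)(x) \,\bigr],
\]
so my goal is to massage the second-order drift into a form where this minimisation is manifest.

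First I would rewrite the alignment operator $\nabla_\theta V^\top P^\pi \nabla_\theta V$ as the outer-product integral $\mathbb{E}_{x \sim \eta}[\, g_\theta(x)\, (P^\pi g_\theta)(x)^\top\,]$, so that its contraction with $f(\theta)$ captures the cross-state correlation between the current TD gradient and its Bellman lookahead. Pairing this with the $\frac{1}{2}\nabla_\theta \mathbb{E}_\eta [\|g_\theta\|^2]$ contribution isolated from the flatness term, I expect $f_1(\theta)$ to align with the negative gradient of $\mathcal{J}$ evaluated along the TD semi-gradient direction $f(\theta)$. The remaining book-keeping is a direct chain-rule expansion and should reproduce the $g^\top g - \gamma g^\top P^\pi g$ integrand exactly.

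Once the alignment is in this form the connection to stability is immediate: the \citet{ghosh2020representations} criterion for convergent linear TD cited in the corollary delineates the set of stable representations as (up to a routine squared-norm rewriting) the super-level set $\{\mathcal{J}(v) > 0\}$. Since the implicit second-order drift is minimising $\mathcal{J}$, it shrinks this stability margin at every iterate, which is the desired conclusion that the correction points toward precisely the worst direction for linear TD stability.

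The main obstacle will be making the phrase \emph{implicitly solves} rigorous for a finite-width, non-linear function approximator: varying $\theta$ does not independently reshape $g_\theta(x)$ across states, so $f_1$ is not literally the gradient of $\mathcal{J}$ but its pullback to the tangent space of the representation manifold at $\theta$. I would therefore state the final conclusion as a directional inequality $\langle f_1(\theta), \nabla_\theta \mathcal{J}(\theta)\rangle \le 0$ rather than exact equality, which preserves the corollary without assuming over-parameterisation. A secondary subtlety is reconciling the sampling distribution $\eta$ appearing in the implicit functional with the weighting $\Xi$ appearing in the Ghosh et al.\ norm; under the standard on-policy setting these agree, which I would flag rather than prove.
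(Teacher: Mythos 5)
First, a point of reference: the paper offers no proof of this corollary at all --- it is asserted immediately after the proof of the second-order observation and is supported only by the informal discussion at the end of that proof, so there is no argument of record to compare yours against. Your reading of the statement is the intended one: writing $g_\theta(x)=\nabla_\theta V_\theta(x)$ and taking $\eta=\Xi$, the functional $\mathcal{J}(g)=\mathbb{E}_{x\sim\eta}[g(x)^\top g(x)-\gamma g(x)^\top (P^\pi g)(x)]$ is the trace of the matrix $G^\top\Xi(I-\gamma P^\pi)G$ whose positive definiteness is the \citet{ghosh2020representations} stability criterion, and the two pieces of $f_1$ do correspond, term by term, to the $g^\top g$ and $-\gamma\, g^\top P^\pi g$ contributions.

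The load-bearing step of your plan, however --- that $f_1$ is a pullback of $-\nabla\mathcal{J}$, or at least satisfies the directional inequality $\langle f_1(\theta),\nabla_\theta\mathcal{J}(\theta)\rangle\le 0$ --- is not established, and I do not believe it holds in the generality you state. The relevant terms of $f_1$ have the form $[\nabla_\theta V\,(\gamma P^\pi-I)\,\nabla_\theta V^\top]\,f(\theta)$: an operator applied to the particular vector $f(\theta)$, with each state weighted by its instantaneous TD error rather than by a state distribution. The paper itself stresses that $f_1$ is not the gradient of any function. By contrast, $\nabla_\theta\mathcal{J}$ is built from $\nabla^2_\theta V_\theta$ contracted against $\Xi(I-\gamma P^\pi)G$; in the linear case it vanishes identically, so your inequality becomes vacuous precisely in the regime where the Ghosh et al.\ criterion applies, and in the nonlinear case the sign of the inner product depends on the Hessian of $V_\theta$ and on $f(\theta)$ and has no reason to be nonpositive. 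Relatedly, the $\mathbb{E}_{x\sim\eta}$ in the corollary does not fall out of $f_1$ as the stationary distribution: the per-state weights inside $f_1$ are TD errors, so identifying $\eta$ with $\Xi$ ``in the on-policy setting'' papers over the actual mismatch. To close the gap you would need either to restrict to a regime where the weighting is controlled (e.g.\ isotropic TD errors, or an overparameterized limit in which $g_\theta(x)$ can be varied freely across states so that ``implicitly solve'' can be read as stationarity of $\mathcal{J}$ over the gradient field), or to weaken the conclusion to the level of the paper's own informal remark.
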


\ntk* 
\begin{proof}
We leverage the dynamics $\partial_t V_t = K(X,X) \nabla_\theta V_\theta \cdot ((\gamma P^\pi - I) + r)$ and follow the derivation of Section 5 of \citet{jacot2021neural}. 
\end{proof}

We can develop intuitions for the kernel gradient descent setting by considering the special case of linear function approximation, where $K(x_1, x_2) = \langle \phi(x_1), \phi(x_2) \rangle$ for some feature map $\phi$. For the moment, we will define $\Phi$ to be a matrix consisting of features for every state in the state space $X$ (i.e. we update all states in the mdp at once). We then obtain
\begin{align}
    \partial_t \mathbf{w}_t & = \alpha \Phi^\top (R^\pi + \gamma P^\pi \Phi \mathbf{w}_t - \Phi \mathbf{w}_t) \label{eq:w-flow} \, .
\end{align}
We can express the evolution of the value function constructed by multiplication of $\Phi$ and $w$ as follows.
\begin{align}
    \partial_t V_t &= (\partial_w V_t)^\top \partial_t w_t = \Phi \partial_t w_t \\
    &= -\Phi (\Phi^\top (I - \gamma P^\pi) \Phi) w \\
    &= - \Phi \Phi^\top (I - \gamma P^\pi) V_t \\
    &= - K (I - \gamma P^\pi) V_t
    \intertext{We further consider the dynamics of the value function on inputs outside of the set of states on which the Bellman updates are computed as follows.}
    \partial_t V_t(\xtest) &= (\partial_w V_t(\xtest))^\top \partial_t w_t \\
    &= - \phi(\xtest)^\top  \Phi^\top (I - \gamma P^\pi) V_t  \\
    &= - K(\xtest, \Xtrain) K(\Xtrain, \Xtrain)^{-1} \partial_t V_t
\end{align}

We now lift the assumption that all states are updated. In this more general kernel gradient descent setting, we let $K$ be a kernel as before, with $\tilde{K} = K(\Xtrain, \Xtrain)$ and $\kappa_{\xtest} = K(\xtest, \Xtrain)$. We then obtain the following dynamics
\begin{align}
    \partial_t V_t(\xtest) &= \kappa_{\xtest} \tilde{K}^{-1} \partial_t V_t(\Xtrain) \\
    \intertext{In particular, this results in the following trajectory.}
    V_t(\xtest) &= V_0(\xtest) + \kappa_{\xtest} \tilde{K}^{-1} [ V_t(\Xtrain) - V_0(\Xtrain)]
\end{align}

An interesting case study occurs when we consider, e.g., off-policy evaluation where the bootstrap targets used in the TD updates may not have been visited by the agent during training. This will be the case in many offline RL problems, where the action that would be selected by the policy we seek to evaluate was not taken by the behaviour policy, and so the agent leverages bootstrap targets which are not updated directly as part of the training process, but rather only indirectly via the influence of updates to other states. In such cases, we will decompose the state space as $X = \Xtrain \oplus \Xtest$. The dynamics we get in this case look quite different from standard kernel regression, as the dynamics of the training states will depend on the predictions on the `test' states. To condense notation, we will use $T^\pi V_t$ to refer to an application of the Bellman operator $V_t \mapsto \gamma P^\pi V_t + R^\pi$.  

\begin{align}
    \partial_t V_t(\Xtrain) &= \Phi_{\train} \Phi_\train^\top (  (T^\pi V_t)(\Xtrain) - V_t(\Xtrain)) \\
    \partial_t V_t(\Xtest) &= \Phi_{\test} \Phi_\train^\top  ((T^\pi V_t)(\Xtrain) - V_t(\Xtrain))
    \intertext{We note that $(T^\pi V_t)(\Xtrain)$ depends on both $V(\Xtrain)$ and $V(\Xtest)$ due to the application of the Bellman operator $T^\pi$. We thus end up with the following joint system.}
    \partial_t V_t(\Xtrain \oplus \Xtest) &= \Phi_{\test} \Phi_\train^\top  ((T^\pi V_t)(\Xtrain) - V_t(\Xtrain)) \oplus \Phi_{\train} \Phi_\train^\top (  (T^\pi V_t)(\Xtrain) - V_t(\Xtrain)) \\
    \partial_t V_t(\Xtrain \oplus \Xtest) &= (\Phi_{\test} \oplus \Phi_{\train} ) \Phi_\train^\top (  (T^\pi V_t)(\Xtrain) - V_t(\Xtrain)) 
    \intertext{Using a non-standard notation of $K_1 \oplus K_2:= X \mapsto K_1(X) \oplus K_2(X)$, we can then rewrite the above in terms of the dot product kernel $K(x,x')$ as follows.}
    \partial_t V_t(X_{\mathrm{all}}) &= (\tilde{K} \oplus \kappa_{\xtest})  [ (T^\pi V_t - V_t) (\Xtrain)]
\end{align}       
We emphasize that while this at first looks as though the dynamics are independent of the value $V_t(\Xtest)$, this is an artefact of the Bellman operator notation $(T^\pi V_t) (X_t)$, which hides the dependence of the Bellman targets $ T^\pi V_t$ on $\Xtest$. In particular, we can write $(T^\pi V_t)(X_t) = \Pi_{\Xtrain}[\gamma P^\pi V_t (\Xtrain \oplus \Xtest) + R^\pi]$, which makes this dependence more explicit but is less succinct.

\section{Experiment details}
\label{apx:details}
\subsection{Estimation of Update Rank}

To estimate the update rank of an agent, we sample $k$ transitions from the agent's replay buffer and compute the matrix $A(\theta)$ as described in Section~\ref{sec:rank-exps}. We use the agent's current optimizer state and its current parameters in this computation. We then take the singular value decomposition of $A$ to obtain $k$ singular values $S = \{\sigma_1, \dots, \sigma_k\}$. We then threshold using the numerical approach taken in prior works \citep{maddox2020rethinking}, and compute the size of the set $S_{\epsilon} = \{ \sigma \in S : \sigma > \epsilon \max(S) \}$. This allows us to ignore directions of near-zero variation in the update matrix. In practice, we use $\epsilon = 0.1$. 

Because the Q-functions learned by value-based deep RL agents are vector- rather than scalar-valued functions of state, and our estimator depends on an 2-dimensional update matrix, we must make a choice on how to represent the change in the state-value function. We considered taking the maximum over actions, the mean over actions, selecting a fixed action index, and selecting the action taken in the transition on which the update was computed, and found that both choices produced similar trends. In all evaluations in this paper, Q-functions are reduced using the max operator. We apply the same approach for distributional agents by taking the expectation over the distribution associated with each state-action pair. 

To evaluate the policy-based agents, whose outputs correspond to distributions over actions, we compute the norm of the difference in the output probability distributions for each state in lieu of taking the difference of output values. I.e., the entry $A_{i,j} = \| p_\theta(x_j) - p_{\theta_i}(x_j) \|$, where the discrete probability distribution $p_\theta$ is taken as a vector. 
\subsection{ProcGen}
\begin{figure}
    \centering
    \includegraphics[width=0.35\linewidth]{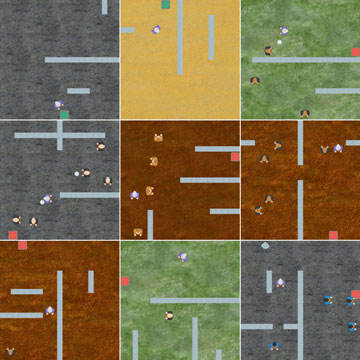}
    \caption{Example levels from the dodgeball environment.}
    \label{fig:procgen-viz}
\end{figure}
The ProcGen benchmark consists of sixteen procedurally generated environments. Each environment consists of a set of randomly generated levels, of which a fixed subset are used for training and a disjoint subset are used for evaluation. Levels differ superficially in their observations and initial sprite layouts but retain the same underlying structure, as can be seen in Figure~\ref{fig:procgen-viz}. The observation space is a box space with the RGB pixels the agent sees in a numpy array of shape (64, 64, 3).

Our PPO and DAAC agents use the same hyperparameters and implementation as is provided by \citet{raileanu2021decoupling}. Our behaviour cloning objective minimizes the KL divergence between the distillation agent the pretrained agent's policies, with an entropy bonus equal to that used to train the original PPO agent.  
\subsection{Atari}

We additionally perform evaluations on environments from the Atari benchmarks. Due to computational constraints, we consider only a subset of the entire benchmark. We obtain a mixture of easy games, such as pong and boxing, and more challenging games like seaquest, where we measure difficulty by the time it takes for the agent to meet human performance. For some experiments, we used the sparse-reward environment Montezuma's Revenge.

In our distillation experiments, we train the original agent for 50M frames using $\epsilon$-greedy exploration with $\epsilon = 0.1$, and train the distillation agents for a number of updates equivalent to 10M frames of data collected online. We base our implementation off of the open-source implementations in \citet{ostrovski2021the}. 

For our behaviour cloning objective, we use the same architecture as is used for DQN, but feed the final layer of actions into a softmax to obtain a probability distribution over actions, which we denote as $P_\theta(a|x)$. Given a state-action pair taken by the target agent, we implement the following behaviour cloning loss for distillation
\begin{equation}
    \ell(\theta, x_i, a_i) = -\log P_\theta(a_i | x_i) -0.1 H(P_\theta(\cdot | x_i))
\end{equation}
where $H$ denotes the entropy of a distribution. We use a replay capacity of 1e6 and allow the pre-trained agent to collect additional data during distillation to further increase the training set size of the distilled agents. 
\section{Additional numerical evaluations}
\label{sec:numerical}
We provide additional numerical evaluations to provide additional insight into the theoretical results of Section~\ref{sec:theory}.
\subsection{Fourier analysis}
We begin by studying the Fourier decomposition of value and reward functions in popular Atari domains by treating the value function as a function of \textit{time} rather than as a function of \textit{observations}. In this sense, the Fourier decomposition is measuring the continuity of the value function with respect to time and so is a closer approximation of the notion of smoothness we focus on in Section~\ref{sec:vf_gen}. We show our evaluations in Figure~\ref{fig:atari_fourier}.
\begin{figure}
    \centering
    \includegraphics[width=18cm]{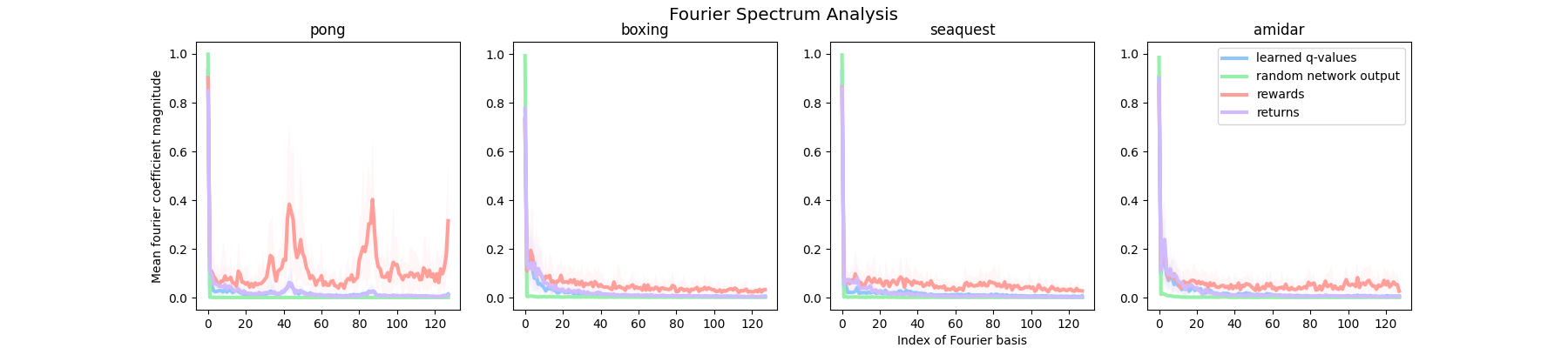}
    \includegraphics[width=18cm]{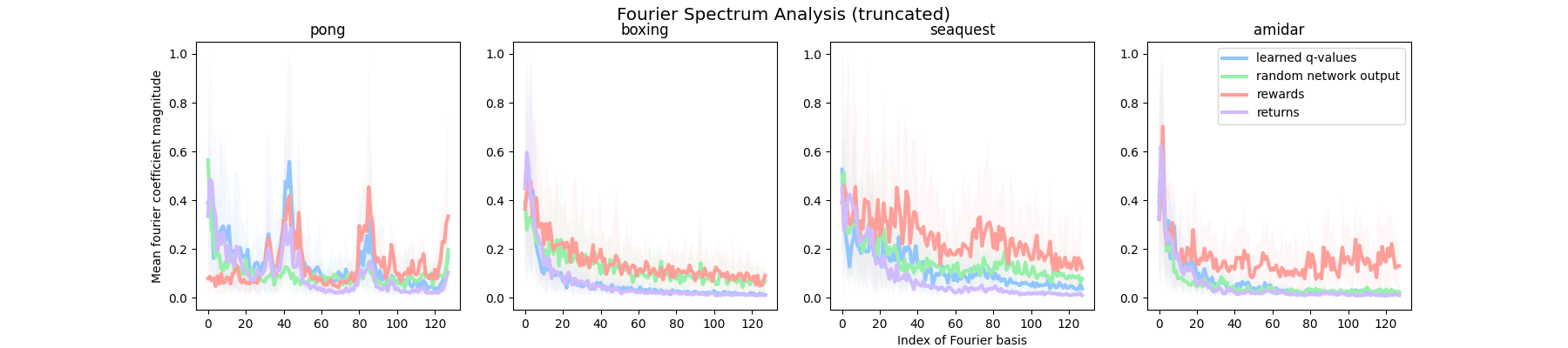}
    \caption{Fourier decomposition of Atari value functions when viewed as a function of time. We sample $k$ consecutive states from the replay buffer and compute the predicted value on each state (fixing an arbitrary action) to get a function $V: \{1, \dots, k \} \rightarrow \mathbb{R}$. We then compute the Fourier decomposition of this function. The top row shows indices $k=0 \dots 50$, while the bottom row omits the $k=0$ index (the constant function) to better illustrate the rate of decay of the spectrum of each function.}
    \label{fig:atari_fourier}
\end{figure}

\subsection{Kernel gradient descent}
\label{appx:kernel-gd}
We include an illustration of the kernel gradient descent dynamics described in Section~\ref{sec:fa_gen} in Figure~\ref{fig:kernel-dynamics}. We run our evaluations using a radial basis function (RBF) kernel of varying lengthscale, with shorter lengthscales corresponding to weaker generalization between states. While the shorter lengthscale corresponds to more stable learning dynamics and better fitting of the value function on the training set, it also induces greater value approximation error on the test states. In contrast, the longer lengthscales result in better generalization to novel test states under Monte Carlo dynamics, but result in divergence for large values of $\gamma$.
\begin{figure}
    \centering
    \includegraphics[width=0.49\linewidth]{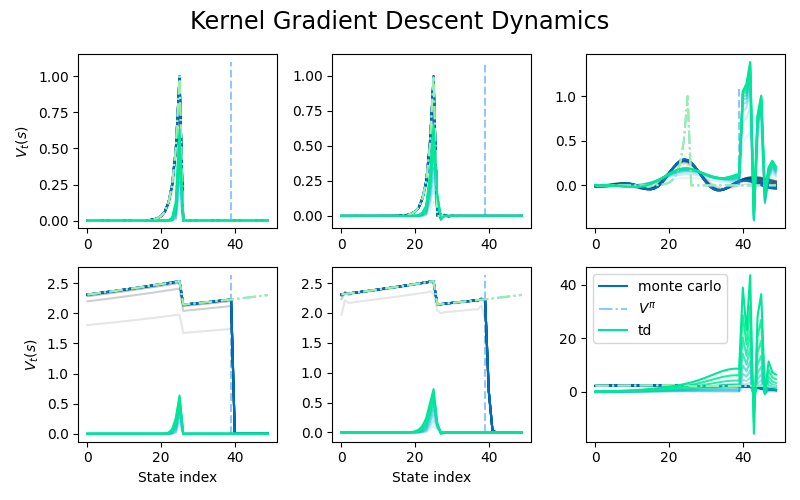} \vline 
    \includegraphics[width=0.49\linewidth]{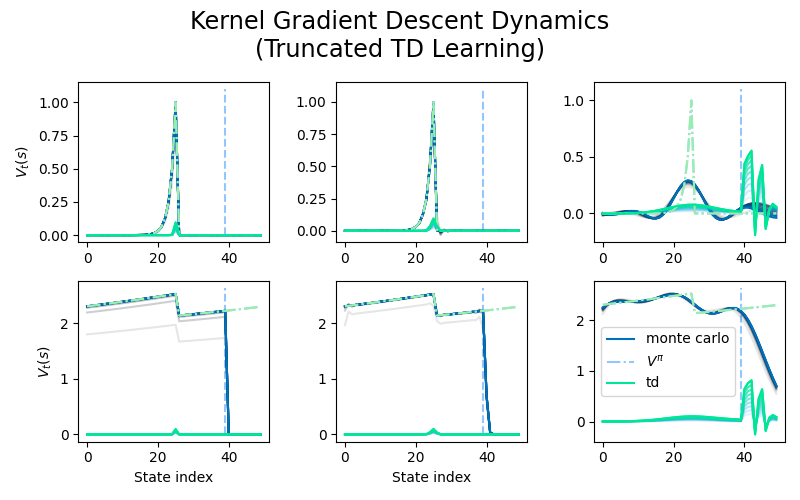}
    \caption{Numerical evaluations of kernel gradient descent with an RBF kernel. The MDP in question is a "circle MDP" whose states are integers $n \in \{1, \dots, 50\}$. We assume the agent is `trained' on states 1 to 40, and doesn't perform value function updates on the final ten states, use the policy which always takes the agent from state $n$ to $n+1 \mod 50$, and set a single reward at state 25. Each row corresponds to a different value of the discount factor $\gamma$: the top corresponds to $\gamma = 0.5$, and the bottom to $\gamma = 0.99$. Each column corresponds to the lengthscale which parameterizes the kernel, going left to right: 0.01, 1.0, and 100. The left hand side and right hand side are distinguished by the number of update steps which the TD dynamics are evaluated for. The LHS runs TD for  only 20 steps, while the RHS runs it for 100 steps. MC updates are run for 1500 steps on both figures. We see that for $\gamma = 0.99$, the larger-lengthscale kernel predictions diverge under TD dynamics, though not Monte Carlo. The Monte Carlo dynamics further nicely illustrate the trade-off between generalizing out of the training set and ability to fit the discontinuities of the value function on the training set. The larger lengthscale has lower MSE from the value function on the test set, but fails to fit the discontinuity  of the value function at the reward state. Meanwhile, the smaller lengthscales easily fit the value function on the training set but predict zero for all over states. }
    \label{fig:kernel-dynamics}
\end{figure}

Additionally, as promised in Section~\ref{sec:fa_gen}, we illustrate the role of smooth eigenfunctions in generalization in Figure~\ref{fig:kernel-generalization}. To produce this figure, we randomly generate an unweighted graph and then construct an MDP whose dynamics correspond to a random walk on this graph. We consider the generalization error of a kernel regression process where the kernel $K_S$ is of the form $ K_S(x,y) = \sum_{i \in S} v_{\lambda_i}(x) v_{\lambda_i}(y)$ for some $S \subseteq \mathrm{spec}(P^\pi)$. In the right-hand-side plot of Figure~\ref{fig:kernel-generalization}, we set $S=\{1, \dots, 20\}$, so that our analysis concentrates on smooth eigenfunctions. We then consider the generalization error of this smooth kernel when we only regress on a subset of the state space selected uniformly at random\footnote{Because the MDP-generating process is invariant to permutations of the state indices, we sample the indices $\{1, \dots, \lfloor |\states| \times \mathrm{training fraction} \rfloor \}$, and average over randomly generated MDPs. }. We study the effect of varying the size of this set, i.e. the fraction of states in the training set, in Figure~\ref{fig:kernel-generalization}, in order to quantify the degree to which additional information about the value function translates to improved generalization.
We consider three regression problems: regression on $V^\pi$, regression on the projection of $V^\pi$ onto the span of $T = \{v_1, \dots, v_{20} \}$, and $B = \{v_{n-19}, \dots, v_{n} \}$. Unsurprisingly, we see that the smooth kernel is able to improve its generalization performance as the size of the training set increases when it is set to regress $V^\pi$ or $\Pi_{T} V^\pi = V^\pi_T$. However, when the kernel regresses only on the projection of $V^\pi$ onto the non-smooth eigenvectors, we don't see a benefit of adding additional training points: because there is no information about the smooth components of the function in the targets, adding additional data points will not help to improve regression accuracy. The left hand side of the figure shows similarly that fitting local information in the form of $n$-step returns for small $n$ also does not provide the kernel with sufficient information for it to be able to extrapolate and improve its generalization error as the size of the training set increases.

\begin{figure}
    \centering
    \includegraphics[width=0.8\linewidth]{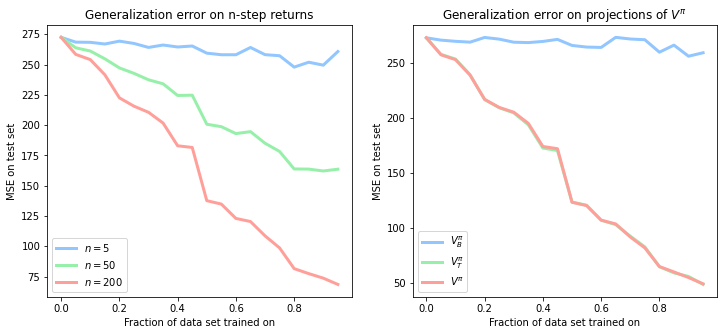}
    \caption{Generalization of predicted function under kernel regression using $n$-step return targets evaluated on a random subset of states (left), and projecting value function onto top or bottom eigenvectors of $P^\pi$ (right). We see a similar trend where for larger $n$ (corresponding to smoother targets), the kernel regression method generalizes better with increasing dataset sizes. For smaller $n$ and for the projection of $V^\pi$ onto non-smooth eigenvectors, adding additional data points doesn't improve generalization performance.}
    \label{fig:kernel-generalization}
\end{figure}
\section{Additional empirical results}
\label{apx:more-results}
\subsection{Additional value distillation results}
We consider three different types of regression to the outputs of the pre-trained network, along with two more traditional bootstrapping methods for offline RL. \texttt{Q-regression} regresses the outputs of the distilled network to those of the pre-trained network for every action. \texttt{qa-regression} only  does q-value regression on the action taken by the pre-trained agent. \texttt{adv-regression} regresses on the advantage function (computed as the q-value minus the mean over all actions) given  by the pre-trained agent; \texttt{qr} does quantile regression q-learning on the offline data; \texttt{double-q} performs a standard double q-learning update on the offline data.

We find that all of these methods obtain an initial update rank significantly below that of the pre-trained network when they begin training, which increases over time. Regression to the advantages obtains a significantly lower update rank than any other method, suggesting that the advantage function may be much smoother than the action-value function. With respect to performance on the original environment, we see that the methods which use all action values at every update obtain significantly higher performance than those which only update a single action at a time. This improvement in performance isn't mediated by an auxiliary task effect or an  increase in the network's ability to distinguish states: the advantage regression network attains low update rank  but high performance, while the qr-regression task provides a great deal of information to the representation but is not competitive with the q-regression network. 
\begin{figure}
    \centering
    \includegraphics[width=0.7\linewidth]{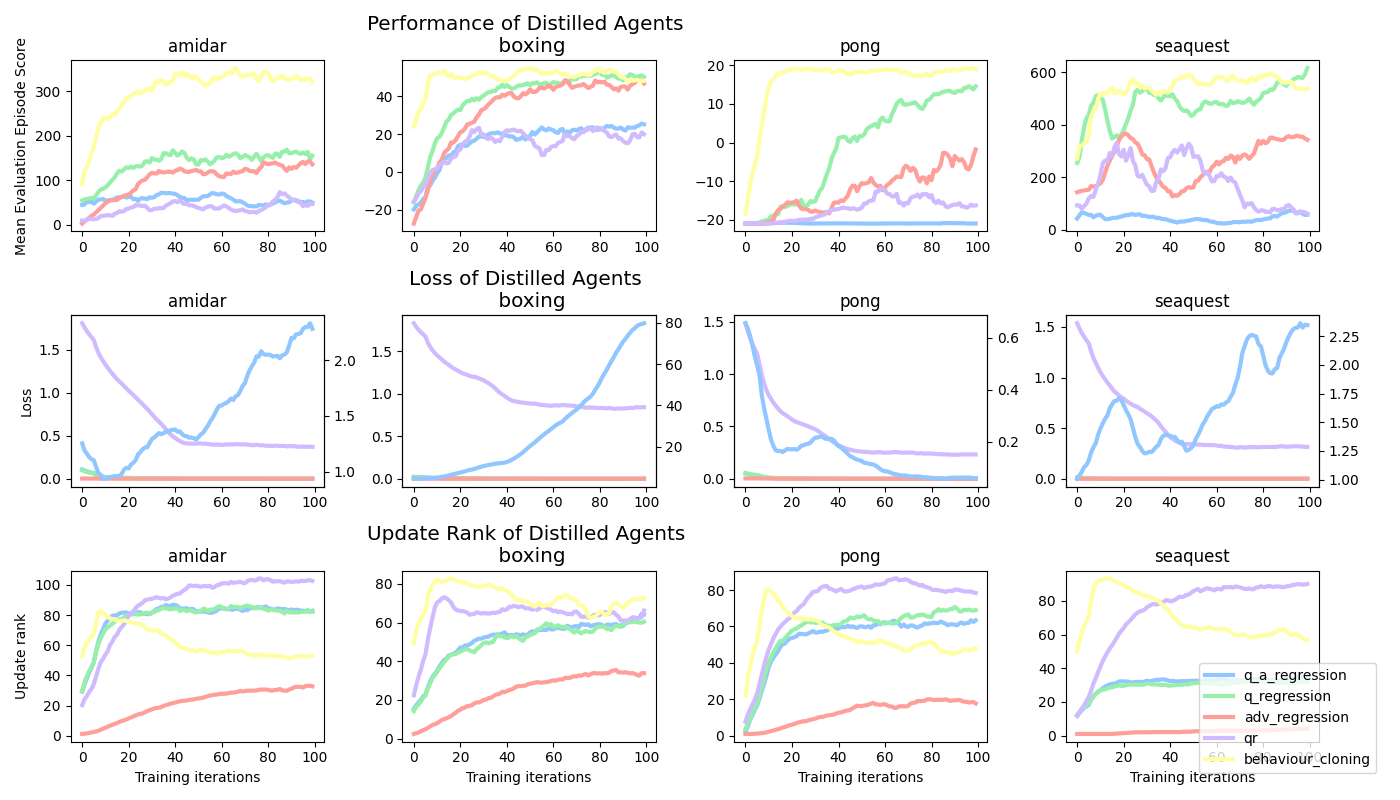}
    \caption{Results from post-training distillation on a variety of objectives. We note that advantage regression tends to exhibit the lowest update rank, with the qr agent tending to exhibit the highest update rank and the q-regression objectives falling somewhere in between. Because the behaviour cloning objective minimizes a cross-entropy loss rather than a regression loss, further investigation is required to understand how the trajectory of its update dimension differs from those of the regression objectives.}
    \label{fig:tandem-apx}
\end{figure}
\subsection{More detailed update trajectories}

We include a more detailed view of the update matrices obtained by DQN and C51 agents during the first 7 million frames of training, roughly 5\% of the training budget, in Figure~\ref{fig:updates-long}. We see that even early in training, the DQN and C51 agents both exhibit significant overfitting behaviour. Note that states are sampled uniformly at random from the replay buffer, and then assigned an index based on the output of a clustering algorithm to improve readability of the figures.

\begin{figure}
    \centering
    \includegraphics[width=0.48\linewidth]{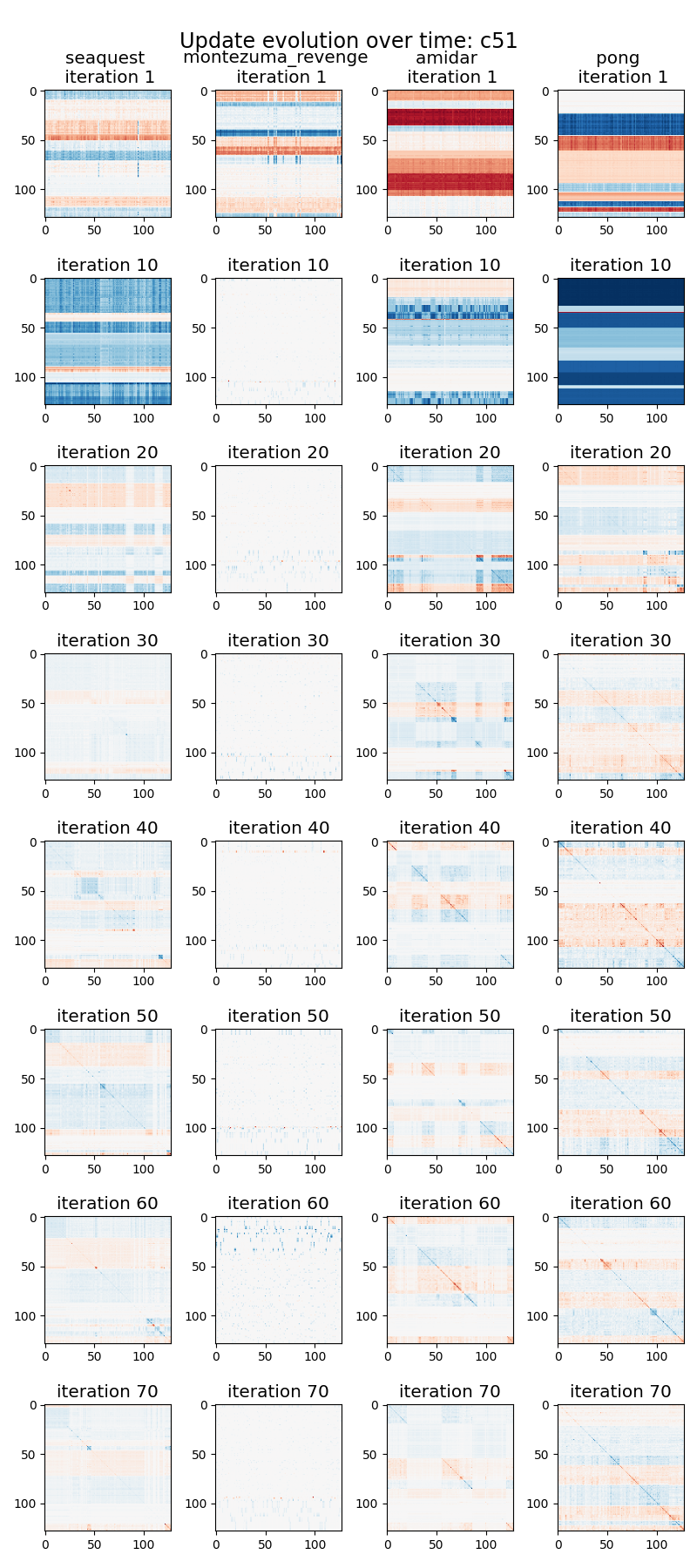}
    \vline
    \includegraphics[width=0.48\linewidth]{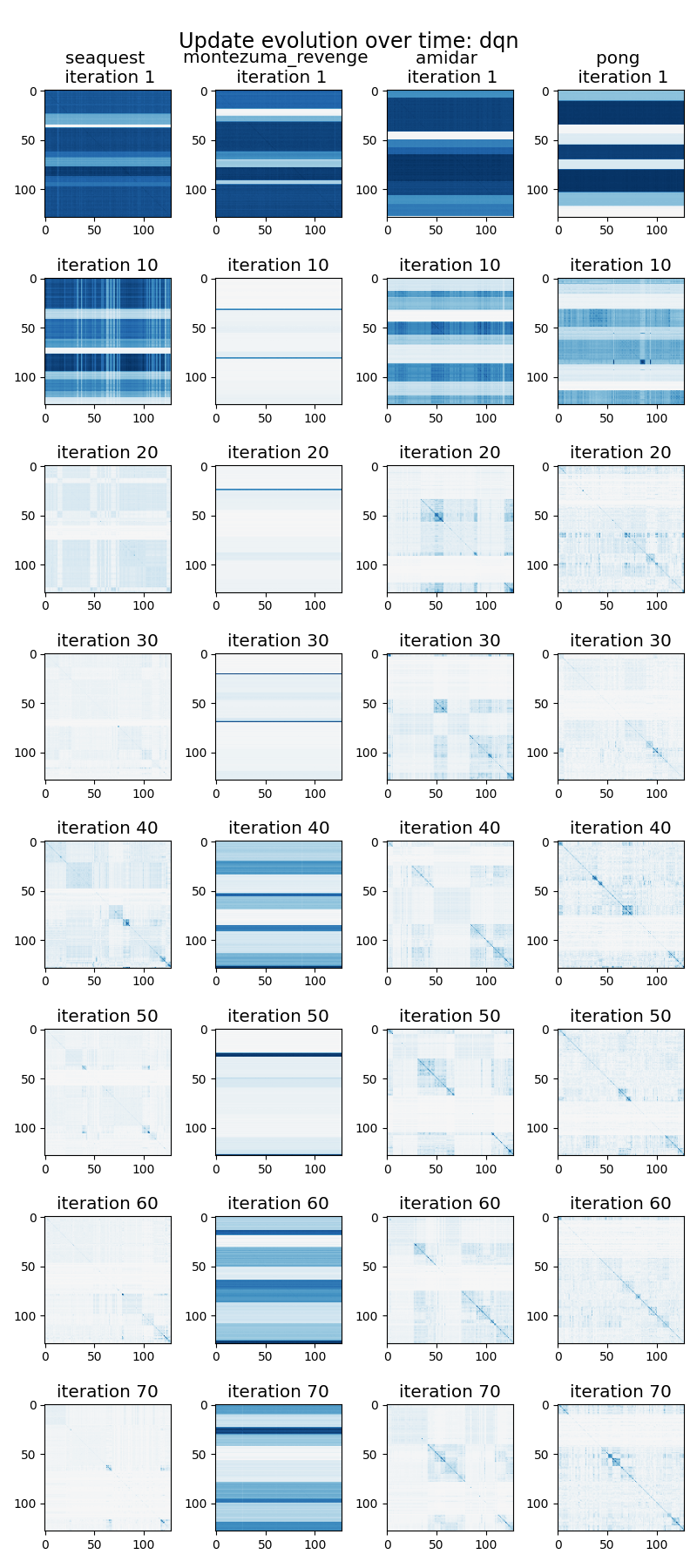}
    \caption{Update matrices for distributional and DQN agents on four games from the Atari suite, chosen to represent a range of reward densities and difficulties. Each iteration corresponds to 1e5 training frames.}
    \label{fig:updates-long}
\end{figure}

\end{document}